\algrenewcommand{\Function}[2]{\State{{\bf function} \textproc{#1}(#2)}}
\algrenewcommand{\algorithmiccomment}[1]{\hspace{\algorithmicindent}\(\triangleright\) #1}
\newtheorem{prop}{Proposition}
\newtheorem{lem}[prop]{Lemma}
\newtheorem*{prop1}{Proposition 1}
\newtheorem*{defn}{Definition}
\DeclareMathOperator{\sgn}{sgn}
\newcommand{\coloneqq}{\mathrel{\rlap{%
                     \raisebox{0.5ex}{$\cdot$}}%
                     \raisebox{-0.5ex}{$\cdot$}}%
                     =}
\title{Robustness Evaluation and Adversarial Training of an Instance Segmentation Model}
\author{%
  Jacob Bond\\
  General Motors Company\\
  \texttt{jacob.bond@gm.com} \\
  \And
  Andrew Lingg\\
  General Motors Company\\
  \texttt{andrew.lingg@gm.com} \\
}
\begin{document}

\maketitle

\begin{abstract}
To evaluate the robustness of non-classifier models, we propose probabilistic local equivalence, based on the notion of randomized smoothing, as a way to quantitatively evaluate the robustness of an arbitrary function.  In addition, to understand the effect of adversarial training on non-classifiers and to investigate the level of robustness that can be obtained without degrading performance on the training distribution, we apply Fast is Better than Free adversarial training together with the TRADES robust loss to the training of an instance segmentation network.  In this direction, we were able to achieve a symmetric best dice score of 0.85 on the TuSimple lane detection challenge, outperforming the standardly-trained network's score of 0.82.  Additionally, we were able to obtain an F-measure of 0.49 on manipulated inputs, in contrast to the standardly-trained network's score of 0.  We show that probabilisitic local equivalence is able to successfully distinguish between standardly-trained and adversarially-trained models, providing another view of the improved robustness of the adversarially-trained models.
\end{abstract}

\section{Introduction}

While deep learning methods are able to achieve excellent performance on inputs originating from the same data distribution as the training data, generalization to additional distributions, including data which has been intentionally manipulated, remains a challenge for deep learning models.  The poor generalizability of these models is a concern both in safety-critical situations, for example as deployment as part of a warehouse robot or an advanced driver-assistance system, but also in situations of great social or economic consequence, such as in financial credit assessment \cite{blattner21} or recidivism algorithms \cite{angwin16}.

On the concern of input manipulation \cite{bimltaxonomy}, adversarial training techniques have made great strides since their principled formulation by M\c{a}dry et al.\  \cite{madry18}, however, a large proportion of the work has focused on object classification and, to a lesser extent, object detection.  There has been significantly less work investigating the adversarial training and robustness of generic machine learning tasks, despite the fact that many of the security-critical applications which demand adversarial robustness lie outside of the object classification domain.

Additionally, evaluation of a model's robustness typically focuses on evaluating the adversarial robustness of the model relative to the training data distribution, despite the frequent tendency for models to see a different data distribution in production.  There is a need for additional methods for evaluating a model's robustness beyond just its performance on manipulated and unmanipulated inputs from the training distribution.

To this end, we reformulated randomized smoothing \cite{cohen19} as a robustness metric.  In order to analyze the efficacy of the robustness metric, we compared the performance of standardly-trained instance segmentation models to models trained using two different approaches to adversarial training.  Specifically, we adversarially trained a lane line instance segmentation model on the TuSimple lane detection challenge \cite{tusimple17}.  By adding a TRADES loss \cite{zhang19} regularization for the segmentation prediction to the adversarial training, we were able to improve the average symmetric best dice score of the model from 0.82 to 0.85, while also improving the average F-measure on manipulated imputs from 0 to 0.49.  We found that the proposed robustness metric was able to differentiate between each of the three models, indicating its effectiveness.

\section{Related Work}

Since the establishment of adversarial training through a principled formulation by M\c{a}dry \cite{madry18}, a number of improvements have been proposed.  Although early attempts at adversarial training using the fast gradient sign method (FGSM) \cite{goodfellow15} to generate perturbations were at best mildly successful \cite{madry18, tramer18}, Wong et al.\ \cite{wong20} was able to establish that FGSM-based adversarial training with randomized initializations achieved comparable results to adversarial training based on projected gradient descent (PGD).  The Fast is Better than Free (FBF) approach proposed by Wong et al.\ crucially reduced the training time of adversarial training to approximately double that of standard training, resulting in an overhead that is typically manageable even for production systems.

Another potentially significant drawback of early approaches to adversarial training was a reduction in accuracy on the original training distribution.  This reduction has since been mitigated to a significant extent.  Zhang et al.\ \cite{zhang19} augment the standard loss function of a classifier by adding a robust loss term meant to encourage the model to return similar results for both natural and manipulated inputs.  Carmon et al.\ \cite{carmon19} utilized a two-stage approach using a standardly-trained model to label a collection of unlabeled data which was then added to the data available for adversarial training.  This approach helped to encourage a model's outputs to be stable around natural inputs.  Gowal et al.\ \cite{gowal21} investigated a number of these improvements in combination, albeit sometimes requiring significant changes from the original model architecture or training procedure, finding that adding TRADES loss, model weight averaging, and unlabeled data all provide meaningful improvements to both the standard and adversarial performance of a robust model.

While adversarial training undeniably makes a model more robust, it remains a challenge to truly identify the impact on a model's performance beyond natural and adversarial performance on the training distribution.  One way to gauge a model's robustness to \(L^{p}\)-norm-bounded perturbations is by plotting a security curve across a range of \(L^{p}\)-norm perturbation bounds, as in Adversarial Robustness Toolbox \cite{art}.  Alternatively, AutoAttack \cite{croce20} provides a parameter-free method for evaluating a model's robustness to input manipulations, though it is largely limited to object classification models.

Following the treatment by Cohen et al.\ \cite{cohen19}, the randomized smoothing framework established a scalable and effective method for creating smoothed classifiers and certifying their robustness.  Salman et al.\ \cite{salman19} then developed an input manipulation attack on smoothed classifiers, using it for adversarial training to improve the provable robustness of these classifiers.  While the original randomized smoothing method was restricted to classifiers, the method has been extended in several directions, the extension to image segmentation by Fischer et al.\ \cite{fischer21} being most relevant to this work.

In \cite{tsipras19}, Tsipras et al.\ show that there exists a data distribution on which it is impossible to learn an accurate and robust classifier.  However, this is only a single and contrived example of such a phenomenon.  Additionally, Tsipras et al.\ note that in the limited data regine, as in this paper, adversarial training can be beneficial to a model's standard accuracy.

\section{Lane Line Instance Segmentation}
\label{sec:laneline}

The model we use for our experiments is a lane line instance segmentation model with an architecture adapted from Neven et al.\ \cite{neven18}, who themselves adopt the instance segmentation approach of DeBrabandere et al.\ \cite{debrabandere17}.  In \cite{debrabandere17}, an embedded feature space is learned from the data in a branch of the network having the same backbone architecture, but separate parameters from, a binary segmentation branch.  In contrast to \cite{debrabandere17}, who base their instance segmentation network on a DeepLabv1 architecture with a ResNet-38 backbone \cite{wu19}, we chose to adapt the DeepLabv3+ architecture \cite{chen18}, replacing DeepLabv3+'s Xception backbone with ResNet-101 \cite{he16}.  We made these choices based on the relative performance of different permutations of these models on other tasks of interest with similar datasets. As optimal model selection is not the subject of this work, we did not compare the performance of these permutations in adversarial conditions.

The binary segmentation branch of the network segments the pixels as belonging to either the background or a lane line, while the instance embedding branch embeds the pixels in a latent feature space.  Each pixel is embedded in this feature space in a manner so that the pixels for each instance are near each other, while pixels from distinct instances are far away from each other.  The loss function for the binary segmentation branch is the cross-entropy loss, while the instance embedding branch uses the discriminative loss function found in \cite{debrabandere17}.  The model's output is evaluated using the F-measure of the binary segmentation, as well as the symmetric best dice score \cite{scharr16} of the final instance segmentation.

\begin{figure}
\centering
\includegraphics[width=0.29\linewidth]{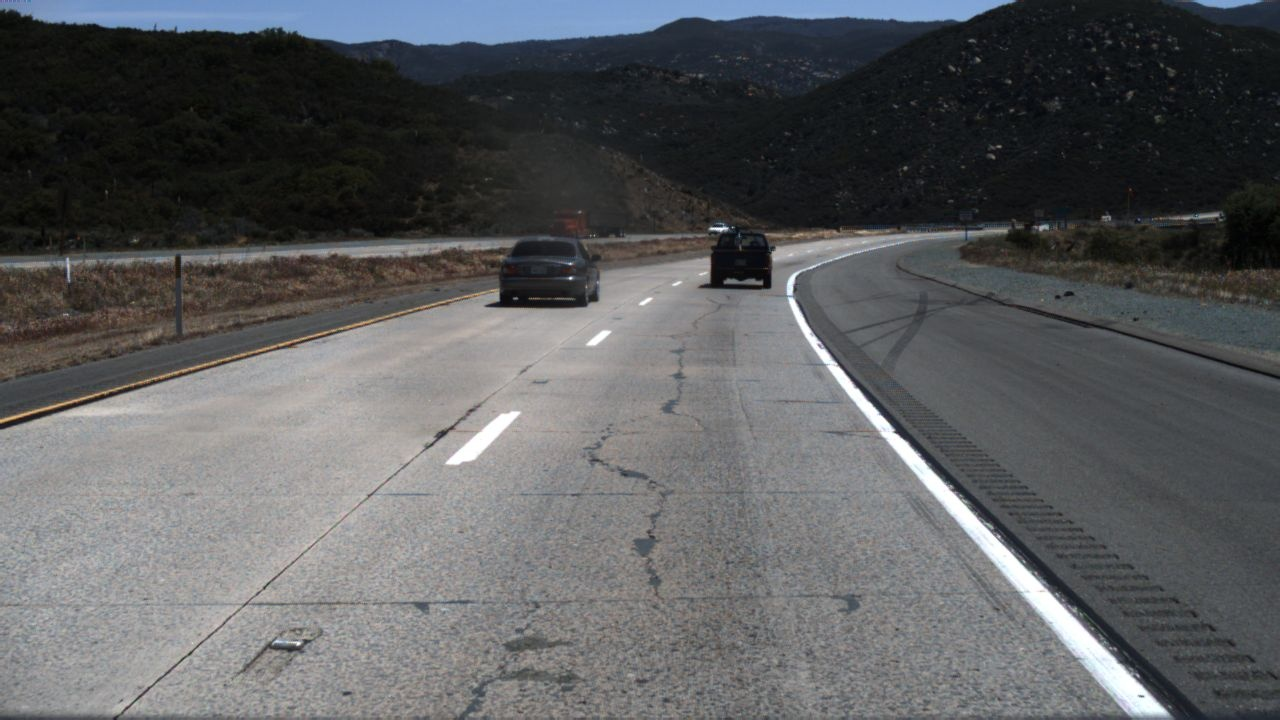}
\hspace*{0.04\linewidth}
\includegraphics[width=0.29\linewidth]{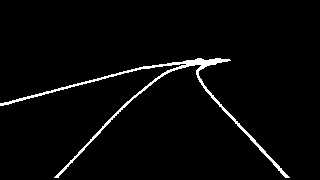}
\hspace*{0.04\linewidth}
\includegraphics[width=0.29\linewidth]{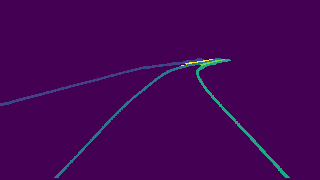}
\caption{A model input (left), the resulting binary segmentation (middle), and the final instance segmentation (right)}
\label{fig:lanenet}
\end{figure}

\section{Probabilistic Local Equivalence Certification}

Comparing the innate robustness of two models is challenging.  Evaluating the adversarial robustness of a model is often too aggressive, as any model which hasn't been adversarially trained will receive a score of 0.  Formally verifying a model typically requires an analysis limited to a specific architecture.  We propose a method based on randomiezed smoothing certification \cite{cohen19} which is applicable to arbitrary functions while also being more sensitive than evaluation of adversarial robustness.

\subsection{Probabilistic Local Equivalence Certification}

A function $f$ is robust if small changes to its input result in small changes to its output.  To this end, given inputs $x$ and $x'$, we will define a notion of how similar the outputs $f(x)$ and $f(x')$ are.  Let \(f: X \rightarrow Y\) be a function and let \(\mathcal{M}: Y \times Y \rightarrow [0, 1]\) be a scoring function evaluating how similar two elements \(y,y' \in Y\) are to each other.  Then \(\mathcal{M}\) induces a semipseudometric \(d\) on \(X\) by
\begin{equation}
d(x, x') = 1 - \mathcal{M}\big(f(x), f(x')\big)
\label{eqn:metric}
\end{equation}
and the set \(B_{d}(x, t) \coloneqq \{x' \in X \mid d(x, x') < t\}\) defines the points \(x' \in X\) so that \(f(x')\) is close to \(f(x)\) relative to \(\mathcal{M}\).  The robustness threshold $t$ is ultimately an application-specific choice depending on a number of factors, but the goal is to capture the amount of variation in the model's output which will avoid significant changes to any downstream results.  An extreme example, as mentioned below, occurs when $\mathcal{M}$ is the accuracy metric, in which case all values of $t \in (0, 1)$ are equivalent.

For \(d\) defined as in \eqref{eqn:metric}, the points in \(B_{d}(x, t)\) can be considered as being equivalent to \(x\) relative to \(f\), \(\mathcal{M}\), and \(t\) since for suitable $t$, $f$ gives equivalent outputs at all points of $B_{d}(x, t)$.  Returning to the concept of robustness, $f$ is robust if at all nearby inputs, $f$ gives similar outputs.  Translating this into the language introduced above, $f$ is robust at $x_{0}$ for radius $\delta$ if $B_{L^{2}}(x_{0}, \delta) \subseteq B_{d}(x_{0}, t)$.  While evaluating whether $B_{L^{2}}(x_{0}, \delta) \subseteq B_{d}(x_{0}, t)$ is difficult, the randomized smoothing framework provides a method for approximating this.  Specifically, the framework allows us to determine whether
\[\textrm{for all }x \in B_{L^{2}}(x_{0}, \delta),\quad P_{\epsilon \sim \mathcal{N}}\big(x + \varepsilon \in B_{d}(x_{0}, t) \big) > 0.5,\]
where $\mathcal{N}$ is a normal distribution.  In this direction, we have the following definitions.

\begin{defn}
Relative to a choice of \(d\), \(t\), and distribution $\mathcal{D}$,
\begin{enumerate}
\item If \(B_{L^{2}}(x, \delta) \subseteq B_{d}(x_{0}, t)\), then \(f\) is locally equivalent to \(f(x_{0})\) at \(x\) for radius \(\delta\).
\item If \(P_{\varepsilon \sim \mathcal{D}}\big(x + \varepsilon \in B_{d}(x_{0}, t)\big) \geq 0.5\), then \(f\) is probabilistically locally equivalent to \(f(x_{0})\) around \(x\).
\end{enumerate}
\end{defn}

Having defined \(d\) as in \eqref{eqn:metric}, fix a value of \(t\) and for any \(x \in X\), let \(B_{x} \coloneqq B_{d}(x, t)\).  The framework of \cite{cohen19,salman19} will be applied to the indicator function \(\mathds{1}_{x_{0}}: x \mapsto \mathds{1}(x \in B_{x_{0}})\).  Specifically, let $\mathcal{N} \coloneqq \mathcal{N}(0, \sigma^{2}I)$, the normal distribution centered at $0$ with standard deviation $\sigma$ with cumulative distribution function $\Phi$, and consider the smoothed function
\begin{equation}
\widehat{\mathds{1}_{x_{0}}}(x) \coloneqq \mathbb{E}_{\varepsilon \sim \mathcal{N}} \mathds{1}_{x_{0}}(x + \varepsilon) = P_{\varepsilon \sim \mathcal{N}}(x + \varepsilon \in B_{x_{0}}).
\label{eqn:smoothed}
\end{equation}
Then the \texttt{Certify} algorithm presented in \cite{cohen19}, when applied to \(\widehat{\mathds{1}_{x_{0}}}\), first determines a lower bound \(\underline{p}\) for \(\widehat{\mathds{1}_{x_{0}}}(x_{0})\), which establishes that \(f\) is locally equivalent to $f(x_{0})$ at \(x_{0}\) with probability at least \(\underline{p}\).  However, \cite{salman19} shows that \(\Phi^{-1} \circ \widehat{\mathds{1}_{x_{0}}}\) is \(1/\sigma\)-Lipschitz, so that the value \(\widehat{\mathds{1}_{x_{0}}}(x_{0})\) provides additional information about the surrounding neighborhood.  In this direction, the \texttt{Certify} algorithm applied to \(\widehat{\mathds{1}_{x_{0}}}\) is then able to provide a radius \(R\) guaranteeing, with probability \(1-\alpha\), that \(f\) is probabilistically locally equivalent to \(f(x_{0})\) around all \(x \in B_{L^{2}}(x_{0}, R)\).  This algorithm is reframed for the current context as \texttt{CertifyProbabilisticEquivalence}; see \cite{cohen19} for additional details surrounding the original algorithm.

\begin{algorithm}[ht]
    \caption{Certification of probabilistic local equivalence of a function \(f\)}
    \label{alg:certify}
\begin{algorithmic}[1]
    \Function{CertifyProbabilisticEquivalence}{\(f\), \(x_{0}\), \(y\), \(\sigma\), \(n\), \(\alpha\), \(\mathcal{M}\), t}
    \State \(y_{0} \gets f(x_{0})\)
    \State \(x' \gets\) \(n\) samples from \(\mathcal{N}(x, \sigma^{2}I)\)
    \State \(y' \gets f(x')\)
    \If{\(y\) is \texttt{None}}
        \State \texttt{num\_equivalent} \(\gets\) C\textsc{ount}T\textsc{rue}\(\big(1 - \mathcal{M}(y_{0}, y') < t\big)\)
    \Else{}
	  \State \texttt{num\_equivalent} \(\gets\) C\textsc{ount}T\textsc{rue}\(\Big(1 - \frac{\mathcal{M}(y_{0}, y)}{\mathcal{M}(y', y)} < t\Big)\)
    \EndIf
    \State \(\underline{p} \gets\) L\textsc{ower}C\textsc{onf}B\textsc{ound}\((\texttt{num\_equivalent}, n, 1-\alpha)\)
    \If{\(\underline{p} > \frac{1}{2}\)}\Return{radius \(\sigma\Phi^{-1}(\underline{p})\)}
    \Else{} \Return{ABSTAIN}
    \EndIf
\end{algorithmic}
\end{algorithm}

\begin{algorithm}[ht]
    \caption{Compute a robustness score of the function \(f\)}
    \label{alg:score}
\begin{algorithmic}[1]
    \Function{RobustnessScore}{\(f\), \(E\), \(\sigma\), \(n\), \(\alpha\), \(\mathcal{M}\), \(t\)}
    \State \texttt{radii} \(\gets\) \([]\)
    \For{\((x, y) \in E\)}
        \State \texttt{radius} \(\gets\) C\textsc{ertify}P\textsc{robabilistic}E\textsc{quivalence}\((f, x, y, \sigma, n, \alpha, \mathcal{M}, t)\)
        \If{\texttt{radius} is ABSTAIN}\textbf{then} Append \(0\) to \texttt{radii}
        \Else{} Append \texttt{radius} to \texttt{radii}
        \EndIf
    \EndFor
    \State \Return M\textsc{ean}(\texttt{radii})
\end{algorithmic}
\end{algorithm}

\begin{defn}
Let
\begin{multline*}
\mathrm{ProbLocEquiv}(S, f, r) \coloneqq
\{x_{0} \in S \mid\\
f\textrm{ is probabilisitically locally equivalent to }f(x_{0})\textrm{ around all }x \in B_{L^{2}}(x_{0}, r)\}.
\end{multline*}
\end{defn}

\begin{prop}
With probability at least \(1 - \alpha\) over the randomness in \verb!Certify! \verb!ProbabilisticEquivalence!, if \verb!CertifyProbabilisticEquivalence! returns a radius \(R\), then \(x \in \mathrm{ProbLocEquiv}(X, f, R)\).
\label{prop:certify}
\end{prop}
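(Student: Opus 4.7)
The plan is to reduce the statement to the standard Cohen--Salman certification guarantee by applying it to the binary indicator function $\mathds{1}_{x_0}$ and its Gaussian smoothing $\widehat{\mathds{1}_{x_0}}$ defined in \eqref{eqn:smoothed}. Unpacking the definition of $\mathrm{ProbLocEquiv}$, what must be shown is that, on the high-probability event that the lower confidence bound is correct, every $x \in B_{L^{2}}(x_{0}, R)$ satisfies $\widehat{\mathds{1}_{x_0}}(x) \geq 1/2$; this is exactly the statement that $f$ is probabilistically locally equivalent to $f(x_{0})$ around $x$.

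First, I would argue that \texttt{num\_equivalent} in \textsc{CertifyProbabilisticEquivalence} is a sum of $n$ i.i.d.\ Bernoulli draws of $\mathds{1}_{x_0}(x_0 + \varepsilon)$, so it is a binomial sample whose mean is exactly $\widehat{\mathds{1}_{x_0}}(x_0)$. Therefore the Clopper--Pearson style estimate returned by \textsc{LowerConfBound} furnishes a scalar $\underline{p}$ which, with probability at least $1-\alpha$ over the sampling randomness, satisfies $\underline{p} \leq \widehat{\mathds{1}_{x_0}}(x_0)$. From this point I would condition on this $(1-\alpha)$-probability event and work deterministically.

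Next, I would invoke the key smoothness lemma from Salman et al., cited in the paragraph preceding the algorithm: the map $\Phi^{-1} \circ \widehat{\mathds{1}_{x_0}}$ is $1/\sigma$-Lipschitz on $X$. Combined with monotonicity of $\Phi$, for any $x$ with $\|x - x_0\|_2 \leq R$ this yields
\[
\widehat{\mathds{1}_{x_0}}(x) \;\geq\; \Phi\!\left(\Phi^{-1}\bigl(\widehat{\mathds{1}_{x_0}}(x_0)\bigr) - \tfrac{\|x - x_0\|_2}{\sigma}\right) \;\geq\; \Phi\!\left(\Phi^{-1}(\underline{p}) - \tfrac{R}{\sigma}\right).
\]
Substituting the algorithm's choice $R = \sigma \Phi^{-1}(\underline{p})$ collapses the right-hand side to $\Phi(0) = 1/2$. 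Since the algorithm only returns a radius in the branch $\underline{p} > 1/2$, the inequality $\widehat{\mathds{1}_{x_0}}(x) \geq 1/2$ is exactly the probabilistic local equivalence condition in the definition, applied with $\mathcal{D} = \mathcal{N}$.

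The only nontrivial step is the Lipschitz bound, and since it is explicitly attributed to \cite{salman19} I would cite it rather than reprove it; the remainder is just unwrapping definitions and tracking the $1-\alpha$ confidence event through the conditional argument. I do not anticipate any real obstacle beyond being careful that the strict inequality $\underline{p} > 1/2$ in the algorithm matches the non-strict threshold $\geq 1/2$ in the definition of probabilistic local equivalence, which is immediate.
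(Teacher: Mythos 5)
Your proposal is correct and follows essentially the same route as the paper: both establish that $\underline{p}$ is a valid $(1-\alpha)$-confidence lower bound on $\widehat{\mathds{1}_{x_{0}}}(x_{0})$ and then invoke the Salman et al.\ result to push the smoothed indicator above $\tfrac{1}{2}$ throughout $B_{L^{2}}(x_{0}, \sigma\Phi^{-1}(\underline{p}))$. The only cosmetic difference is that the paper packages the final step as a restated lemma applied to the pair $(1-\mathds{1}_{x_{0}}, \mathds{1}_{x_{0}})$, whereas you unwind the $1/\sigma$-Lipschitz property of $\Phi^{-1}\circ\widehat{\mathds{1}_{x_{0}}}$ directly into the chain of inequalities.
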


For a given radius \(r\) and evaluation set \(E\), Proposition \ref{prop:certify} can be used to determine the percentage of \(E\) which lie in \(\mathrm{ProbLocEquiv}(E, f, r)\), giving an indication of the robustness of the model to perturbations with an \(L^{2}\) norm less than \(r\).  A robustness score, relative to a given evaluation set $E$ and choice of hyperparameters, can then be computed for a function $f$ by taking the mean of the certified radii over the set $E$.  As in the case of randomized smoothing certification, labels for the evaluation set $E$ are not required.

\subsection{When Labels Are Available}

In the case that ground-truth labels $\{y_{i}\}$ are available for points $\{x_{i}\} \subseteq X$, there is an alternative formulation for \eqref{eqn:metric}.  In this case, rather than simply looking at the similarity between \(f(x_{1})\) and \(f(x_{2})\), it may be desirable to determine how these values differ relative to the ground-truth \(y\).  When a pair \((x, y) \sim \mathcal{D} = X \times Y\), \eqref{eqn:metric} can be replaced with
\begin{equation}
d_{y}(x, x') = 1 - \frac{\mathcal{M}(f(x'), y)}{\mathcal{M}(f(x), y)}.
\label{eqn:metric2}
\end{equation}
The convention $d_{y}(x, x') = 1$ if $\mathcal{M}(f(x), y) = 0 = \mathcal{M}(f(x'), y)$ and $d_{y}(x, x') = -\infty$ if $\mathcal{M}(f(x), y) = 0 \not= \mathcal{M}(f(x'), y)$ will be adopted.
While \(d_{y}\) is no longer even a semipseudometric, the set \(B_{d_{y}}(x, t)\) contains the points \(x'\) so that replacing \(x\) by \(x'\) does not significantly degrade the model's performance.  In particular, note that if \(\mathcal{M}(f(x'), y) > \mathcal{M}(f(x), y)\), then \(d_{y}(x, x') < 0\) and \(x' \in B_{d_{y}}(x, t)\).

\subsection{The Case of Classification}

Importantly, in the case of classifiers, the above measure of robustness reduces to the standard measure of robustness.  In this case, \(\mathcal{M}\) is the accuracy metric:
\[\mathcal{M}(y_{1}, y_{2}) = \begin{cases}
1 & \textrm{if \(y_{1} = y_{2}\),}\\
0 & \textrm{otherwise.}
\end{cases}\]
Defining \(d\) as in \eqref{eqn:metric},
\begin{align*}
x' \in B_{d}(x, t) &\Longleftrightarrow 1 - \mathcal{M}\big(f(x), f(x')\big) < t
\Longleftrightarrow 1 - t < \mathcal{M}\big(f(x), f(x')\big),
\end{align*}
so that for \(0 < t < 1\), \(x' \in B_{d}(x, t) \Longleftrightarrow f(x) = f(x')\).

\subsection{The Case of Lane Line Segmentation}
\label{sec:lanelineequiv}

For lane line instance segmentation evaluation, we use symmetric best dice (SBD) as the score function \(\mathcal{M}\) and define \(d\) as in \eqref{eqn:metric2}.  Figures \ref{fig:lanesegment} and \ref{fig:lanesegmentnoisy} illustrate the situation using a model trained following the approach in Section \ref{sec:advtrain}, showing an equivalent and an inequivalent example.  Figure \ref{fig:lanesegment} shows the original input, the model's output, and the ground truth label, leading to an SBD score of \(0.674\).  Using a relative threshold of \(t = 0.1\) and rearranging \eqref{eqn:metric2} results in
\[d_{y}(x, x') < 0.1 \Longleftrightarrow 0.9 \mathcal{M}(f(x), y) < \mathcal{M}(f(x'), y) \Longleftrightarrow \mathcal{M}(f(x'), y) > 0.60066.\]
In Figure \ref{fig:lanesegmentnoisy}, the first example, using \(\sigma = 0.2\), leads to an SBD score of \(0.661\), indicating that the model \(f\) has made an equivalent prediction on the given input.  However, the second example, using \(\sigma = 0.3\), results in an SBD score \(0.573\), outside of the threshold considered as equivalent.

\begin{figure}
\centering
\includegraphics[width=3cm]{images/orig.png}
\hspace*{1cm}
\includegraphics[width=3cm]{images/pred.png}
\hspace*{1cm}
\includegraphics[width=3cm]{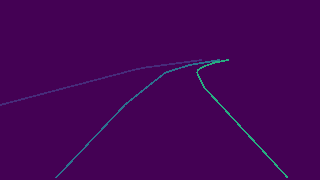}
\caption{The original input \(x\), the model's prediction \(y\), and the ground truth label, resulting in a symmetric best dice score of \(0.674\).}
\label{fig:lanesegment}
\end{figure}

\begin{figure}
\centering
\begin{tikzpicture}
\node (TL) {\includegraphics[width=3.5cm]{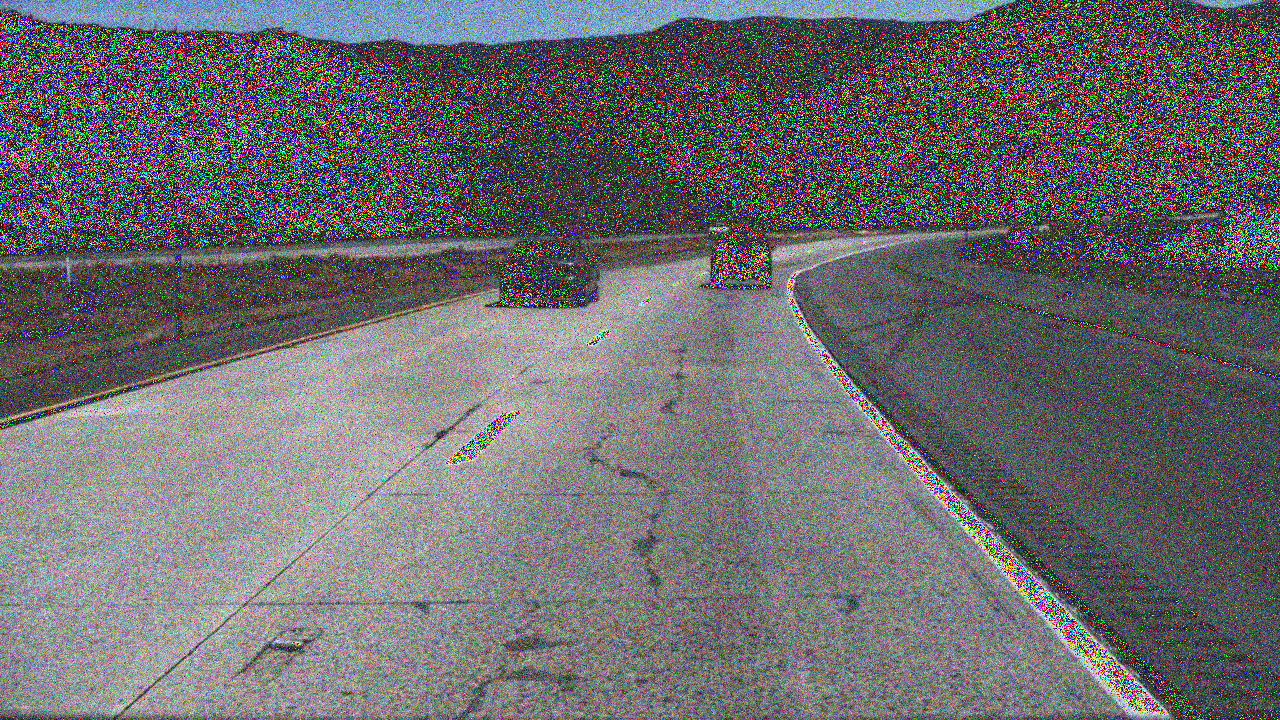}};
\node[right=2cm of TL] (TR) {\includegraphics[width=2cm]{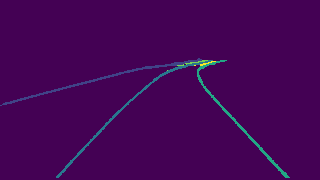}};
\node[below=2cm of TL] (BL) {\includegraphics[width=3.5cm]{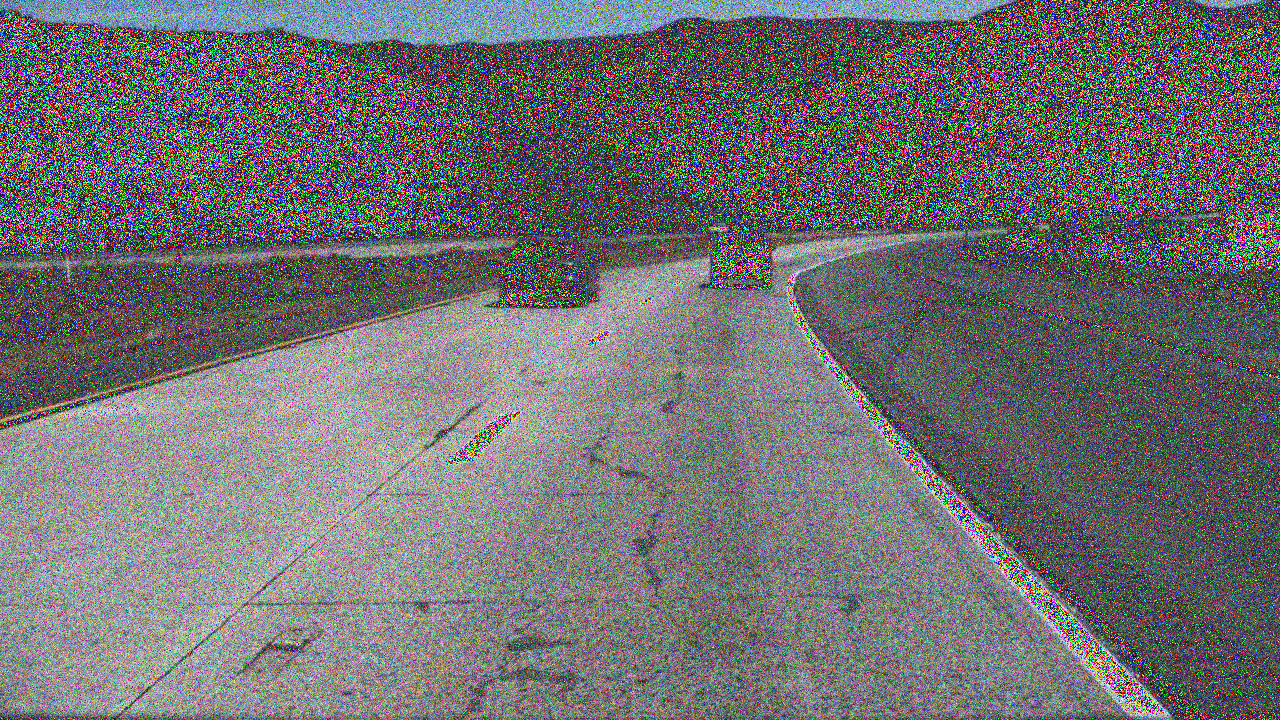}};
\node[right=2cm of BL] (BR) {\includegraphics[width=2cm]{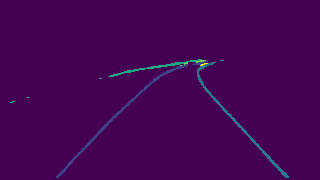}};
\node[right=1cm of TR] (TRR) {\(\mathcal{M}(f(x'), y_{\textrm{gt}}) = 0.661\)};
\node[right=1cm of BR] (BRR) {\(\mathcal{M}(f(x'), y_{\textrm{gt}}) = 0.573\)};
\node[below=1cm of TRR] (MR) {\includegraphics[width=2cm]{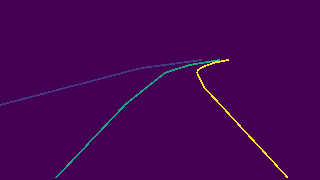}};

\path[-{Stealth[]}]
  (TL) edge node[midway,above] {\(f(x')\)} (TR)
  (BL) edge node[midway,above] {\(f(x')\)} (BR)
  (TR) edge (TRR)
  (BR) edge (BRR)
  (MR) edge (TRR)
  (MR) edge (BRR);
\end{tikzpicture}
\caption{(Above) An input leading to a symmetric best dice score of \(0.661\), within the defined permissible threshold of \(0.0674\) from the original score \(0.674\). (Below) An input leading to a symmetric best dice score of \(0.573\), outside of the permissible threshold.}
\label{fig:lanesegmentnoisy}
\vspace{-\baselineskip}
\end{figure}

\section{Adversarial Training Image Segmentation}
\label{sec:advtrain}

While the explicit premise of adversarial training \cite{madry18} is to improve robustness of a network to small perturbations through data augmentation, the result is improved robustness beyond just the small perturbations included in the data augmentation.  As noted in Engstrom et al.\ \cite{engstrom19}, as human perception is invariant to these perturbations, any model intended to mimic human perception should be invariant to them as well.  In this way, training a model to be robust to these perturbations serves to improve the model's alignment with human perception.  Further, because the model is able to identify features which are invariant to the worst-case perturbations presented during adversarial training, the model should also be robust to other non-worst-case perturbations.  As such, adversarial robustness is a desirable property for models to possess even beyond its importance for security considerations.

A signifcant improvement toward the use of adversarial training for production systems was made by Wong et al.\ \cite{wong20}, reducing the computation required for adversarial training, accomplishing this by leveraging perturbations generated using FGSM after initializing the perturbation at a random point within \(B_{\epsilon}(0)\).  Computing an update then requires only two backpropogation computations:  one to compute the manipulated input and one to compute the weight update.  The result is that adversarial training becomes tractable, even for relatively large models and models with production deadlines.

However, there remains a question of the performance of adversarially trained models on the original training data distribution.  Given the entire training pipeline is optimized towards performing well on training distributions, it is unsurprising that adversarially-trained models within the same pipeline often struggle to match this performance when being asked to {\it also} perform well on a manipulated training distribution.  On the other hand, as the inference-time distribution often varies from the training distribution, there is evidence that the adversarially-trained model may outperform standardly-trained models in the presence of this distribution shift \cite{salman20}.

Nevertheless, improvements to the base adversarial training procedure have helped to improve both the standard and adversarial performance of adversarially-trained models.  One of the more common approaches to improving performance is the use of the TRADES loss function \cite{zhang19}.  Zhang et al.\ adds a regularization term to improve the robustness of the model by seeking to maximize the distance of the decision boundary from each input.  This is accomplished by minimizing the difference between the model's output on an original input and a resulting manipulated input.

Because TRADES is more readily adapted to the binary segmentation output of our network, we use FBF as our base training method to provide robustness to the instance embedding branch.  We then add the TRADES robust loss for the binary segmentation branch.  Note that the original input \(x'\) received by the TRADES regularization has already been manipulated from the input \(x\) in the training set by FBF using FGSM.  That input is then further manipulated by the TRADES regularization to obtain \(x''\), in this case to maximize the KL divergence between \(f(x'')\) and \(f(x')\).  The total loss is then a linear combination of the model's standard loss and the TRADES regularization term.
Additional details are given in Algorithms \ref{alg:train} and \ref{alg:loss}.  For information related to network-specific outputs or loss functions, see Section \ref{sec:laneline} and the corresponding references.

\section{Experiments}

\subsection{Compute Resources Used for Experiments}
\label{subsec:compute}

Experiments were performed on an internal Kubernetes cluster containing NVIDIA DGX-1 machines.  Each node on the cluster contains 8\(\times\) NVIDIA Tesla V100 GPUs, 2\(\times\) 20-Core Intel Xeon%

\begin{algorithm}[ht]
    \caption{FBF + TRADES training of an instance segmentation model \(f\) with parameters \(\vartheta\) for \(T\) epochs, given perturbation bound \(\varepsilon\), step size \(\alpha\), and a dataset of size \(M\)}
    \label{alg:train}
\begin{algorithmic}[1]
    \Function{Train}{\(f\), \(\vartheta\), \(X\), \(Y\)}
    \For{\(t = 1, \ldots, T\)}
    \For{\(i = 1, \ldots, M\)}
        \State \(\delta \gets \mathrm{Uniform}(-\varepsilon, \varepsilon)\)
        \State \(\delta \gets \Pi_{B_{L^{\infty}}(x_{i}, \varepsilon)} \:\delta + \alpha \cdot \sgn(\nabla_{\delta}\)L\textsc{oss}\((f(x_{i} + \delta), y_{i}))\)
        \State \(\vartheta \gets \vartheta - \nabla_{\vartheta}\)L\textsc{oss}\((f(x_{i} + \delta), y_{i})\)
    \EndFor
    \EndFor
    \State \Return \(\vartheta\)
\end{algorithmic}
\end{algorithm}

\begin{algorithm}[ht]
    \caption{Instance segmentation model loss with TRADES regularization for \(N\) steps, given step size \(\eta\), perturbation bound \(\varepsilon\), and regularization weight \(\beta\)}
    \label{alg:loss}
\begin{algorithmic}[1]
    \Function{Loss}{\(f\), \(x\), \(y_{\textrm{seg}}\), \(y_{\textrm{inst}}\)}
    \State \((\texttt{segmentation}, \texttt{instance\_embedding}) \gets f(x)\)
    \State \(\texttt{seg\_loss} \gets\)C\textsc{ross}E\textsc{ntropy}L\textsc{oss}\((\texttt{segmentation}, y_{\textrm{seg}})\)
    \State \(\texttt{instance\_loss} \gets\)D\textsc{iscriminative}L\textsc{oss}\((\texttt{instance\_embedding}, y_\textrm{inst})\)
    \State //Compute input for TRADES robust loss:
    \State \(x' \gets x + 0.001 \cdot \mathcal{N}(0, I)\)
    \For{\(j = 1, \ldots, N\)}
        \State \((\texttt{segmentation}', \texttt{instance\_embedding}') \gets f(x')\)
        \State \(x' \gets \Pi_{B_{L^{\infty}}(x, \varepsilon)} \:x' + \eta \sgn \nabla_{x'}\)KLD\textsc{iv}L\textsc{oss}\(\big(\texttt{segmentation}, \texttt{segmentation}'\big)\)
    \EndFor
    \State \texttt{trades\_loss}\( \gets \)KLD\textsc{iv}L\textsc{oss}\(\big(\texttt{segmentation}, \texttt{segmentation}'\big)\)
    \State \Return \(\texttt{seg\_loss} + \texttt{instance\_loss} + \beta \cdot\texttt{trades\_loss}\)
\end{algorithmic}
\end{algorithm}

E5-2698 v4 CPUs at a clock speed of 2.2 GHz for a total of 80 logical cores, and 512 GB of DDR4 RAM at a bus speed of 2.133 GHz.  Each individual experiment conducted in this paper requested one of the Tesla V100 GPUs, 9 CPU cores, and 60 GB of RAM.

\subsection{Carbon Emissions Related to Experiments}

Altogether, training and validation of our final experiments, those displayed in Figure \ref{fig:performance}, totaled 7037 hours of GPU utilization (Appendix \ref{app:training}).  Based on the utilized GPU, experiment duration, and Michigan's carbon efficiency of \SI{0.4976}{kg.CO\textsubscript{2}/kWh} \cite{miprofile}, the ML CO\textsubscript{2} Impact Calculator \cite{mlco2impact}, as presented in \cite{lacoste19}, estimates the carbon emissions from these experiments to be \SI{1050.49}{kg.CO\textsubscript{2}}.  Additionally, testing the best model from each training method and computing the security and robustness curves in Figure \ref{fig:robustness} required 101.84 hours of computation on an NVIDIA Tesla P100 GPU, resulting in additional emissions of \SI{12.67}{kg.CO\textsubscript{2}}.  According to \cite{epamiles}, our total emissions equates to \SI{4298}{km} driven by an average internal combustion engine vehicle (\SI{0.247}{kg.CO\textsubscript{2}/km}) or \SI{13819}{km} driven by an average electric vehicle (\SI{0.077}{kg.CO\textsubscript{2}/km}; Appendix \ref{app:carbon}).

\subsection{Adversarial Training Setup}
\label{subsec:setup}

Training was performed using the implementation of FBF adversarial training in \cite{art}, modified for use with non-classifiers, using inputs in the space \([-1, 1]^{3\times720\times1280}\).  The dataset used for development was the TuSimple lane detection challenge \cite{tusimple17}, with a train/validation/test split of 3082/181/363 images.  We used the SGD optimizer with a momentum factor of \(0.9\), a perturbation bound of \(\varepsilon = 8/255\) for both FBF and TRADES, \(N = 10\) steps of size \(\eta = \varepsilon/10\) for TRADES, and a regularization weighting of \(\beta = 2.0\times10^{-5}\) that was tuned through experimentation.  The TRADES repository \cite{trades} recommends \(1 \leq \beta \leq 10\) for training a classifier for which the KL divergence is calculated over a one-dimensional array of class probabilities.  As our KL divergence was calculated over a \(320\times180\)-dimensional array of probabilities, this translates to a recommendation of \(1.74\times10^{-5} \leq \beta \leq 17.4\times10^{-5}\), in line with our choice of \(\beta\).  We used \(L^{\infty}\)-norm perturbations for TRADES, corresponding to the use of FGSM in FBF adversarial training.

Because many aspects of AutoAttack \cite{croce20} are targeted at classification, for adversarial evaluation we generated manipulated inputs using the untargeted Projected Gradient Descent attack with a perturbation bound of \(\varepsilon = 8/255\), a step size of \(2/255\), \(20\) steps of gradient descent, and an \(L^{\infty}\) distance metric.  For the evaluation phase of the FBF+TRADES model, the TRADES loss was not included in the computation of the manipulated inputs.

\subsection{Results}

To balance the competing objectives of maximizing both the F-measure of the binary segmentation result as well as the symmetric best dice (SBD) score of the instance segmentation result, we adopt a weighted sum approach to this multi-objective optimization problem, assigning a weight of 1 to each objective function.  The resulting objective, evaluated on the validation set, is plotted as a function of the number of training epochs in Figure \ref{fig:performance}.  Plots for the individual metrics are provided in Appendix \ref{app:performance}.  In the case of a manipulated input, the typical behavior of PGD was to cause the model to classify the entire image as a single lane line, resulting in an F-measure of \(\approx 0\).  Due to the symmetric nature of SBD, this lane line class was associated with the background class of the ground-truth label, rather than any of the \(3\) lane line instances, resulting in an SBD score of \(\approx 1/4\).  This leads to a minimum adversarial \(F_{1} + \textrm{SBD}\) score of \(\approx 0.25\).

\begin{figure}[ht]
    \centering
    \begin{subfigure}{0.46\textwidth}
        \centering
        \includegraphics[width=\textwidth]{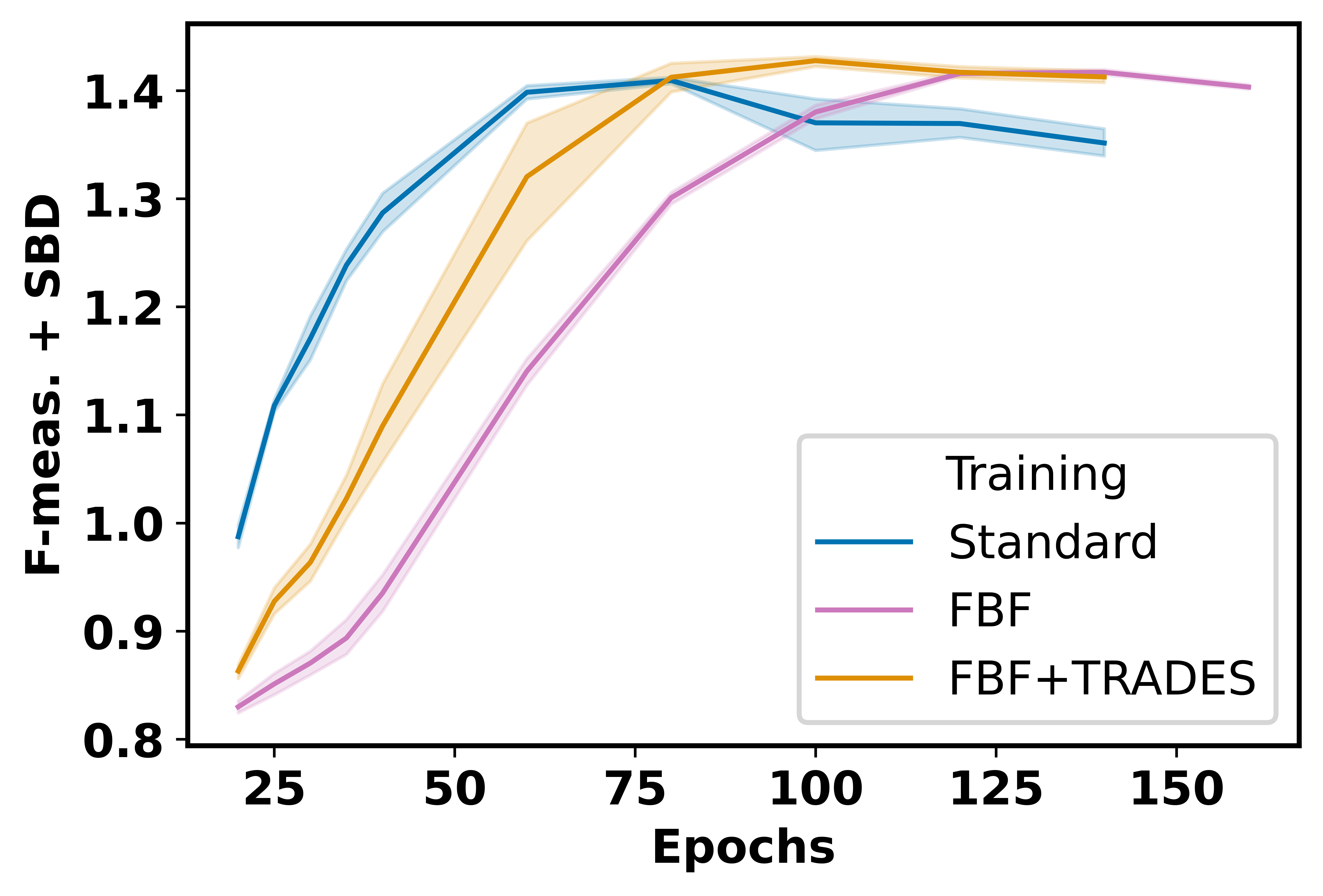}
    \end{subfigure}
    \hfill
    \begin{subfigure}{0.46\textwidth}
	  \centering
        \includegraphics[width=\textwidth]{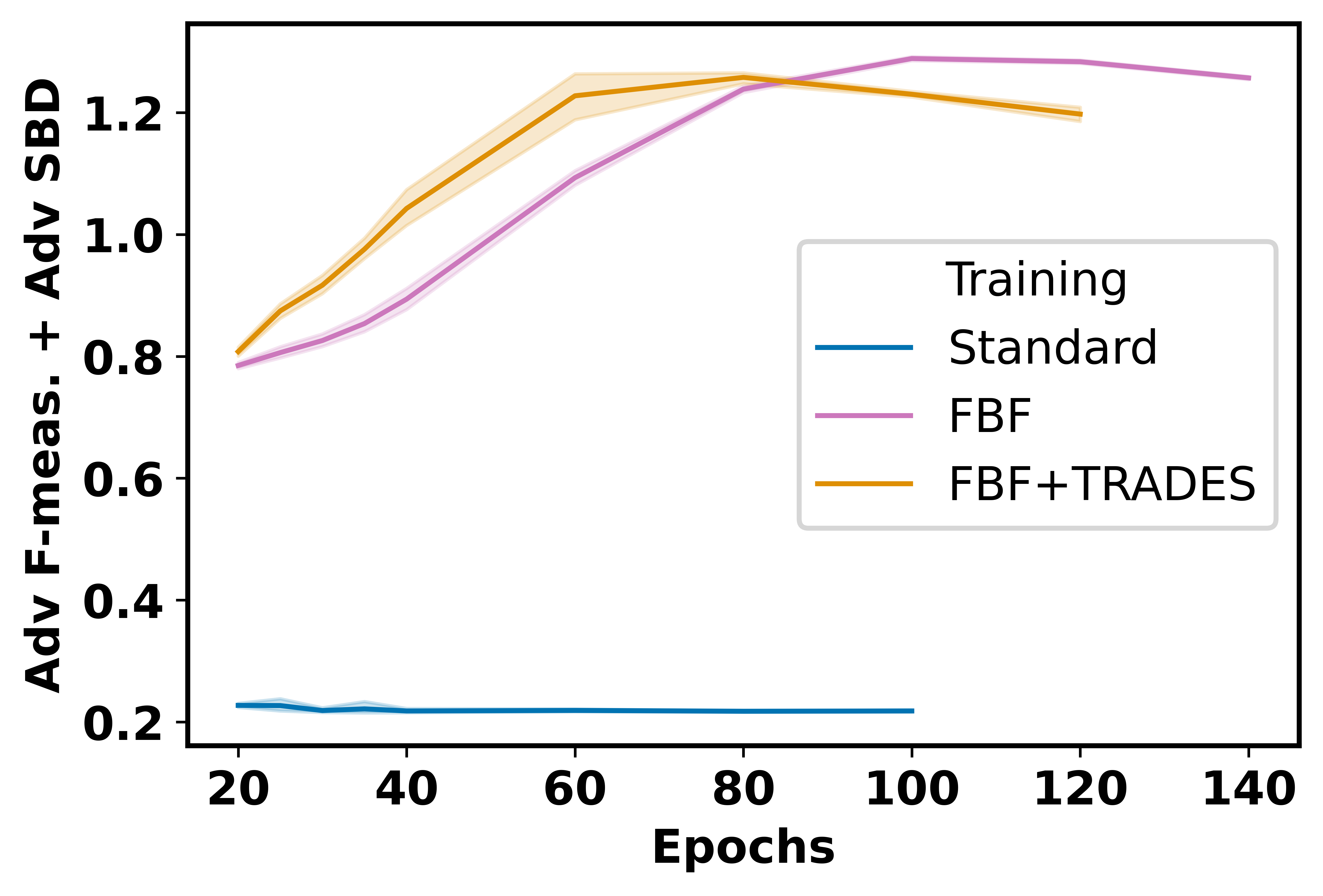}
    \end{subfigure}
    \caption{Performance of the models on natural (left) and adversarial (right) inputs from the validation set as a function of the number of training epochs}
    \label{fig:performance}
\end{figure}

Ultimately, we found that while standard training was able to achieve a peak in performance more rapidly than FBF+TRADES or FBF, both FBF+TRADES and FBF were able to match this performance with sufficient training epochs.  Notably, the performance on natural inputs of the best adversarially-trained models did not suffer as a result of adversarial training, in contrast to what is frequently observed when adversarially training on large object classification datasets.  Both FBF+TRADES and FBF achieved similar levels of robustness, compared to no robustness from standard training, through FBF+TRADES was able to reach a peak in both natural and adversarial performance more quickly than FBF.  Additionally, we found that after peaking, the F-measure of the models plateaued with additional training, while the SBD score began to drop as the models began overfitting to the training set (Appendix \ref{app:performance}).

\begin{table}[h]
\centering
\caption{Performance of best model from each training method on the test set}
\begin{tabular}{lccccc}
\toprule
& \multicolumn{2}{c}{Natural} & \multicolumn{2}{c}{Adversarial} & \\
\cmidrule(lr){2-3}\cmidrule(lr){4-5}
Training Method & F-measure & SBD & F-measure & SBD & Time (hrs)\\
\midrule
Standard (80 Epochs) & 0.59 & 0.82 & 0.00 & 0.21 & 7.28\\
FBF (\cite{wong20}; 140 Epochs) & 0.58 & 0.84 & 0.49 & 0.76 & 27.01\\
FBF+TRADES (Ours; 100 Epochs) & 0.59 & 0.85 & 0.48 & 0.76 & 22.78\\
\bottomrule
\end{tabular}
\label{tab:performance}
\end{table}

For each training method, we selected the model which performed best on the validation set and listed its performance on the test set in Table \ref{tab:performance}.  To better understand the robustness of the different models, we performed a more in-depth analysis of the robustness using the test set.  We first performed an analysis of the models' local equivalence robustness following the method in Section \ref{sec:lanelineequiv} and plotted the results in Figure \ref{fig:robustcurve}, finding that adversarial training improves the robustness of the model to the random perturbations generated in the evaluation.  The local equivalence robustness analysis was performed using \(\sigma = 0.1\), \(\alpha = 0.05\), and \(n = 160\) samples per input.  Figure \ref{fig:robustcurve} shows that local equivalence robustness is able to distinguish between the levels of robustness to normally distributed perturbations present in the standardly-trained and adversarially-trained models, and even between the different adversarially-trained models.  We leave to a future work an analysis of how other robust networks, such as those robust to common corruptions \cite{hendrycks20}, perform under this evaluation.

We also evaluated the models' adversarial robustness.  Although, we weren't able to apply AutoAttack \cite{croce20}, to mitigate the effect of the perturbation budget $\varepsilon$ as a hyperparameter, we plotted the security curve for each model, evaluating the model's performance under a 10-step \(L^{\infty}\) PGD manipulation across a range of perturbation budgets in Figure \ref{fig:seccurve}.  As noted in \cite{croce20}, after a few iterations of PGD, the loss function plateaus (see Appendix \ref{app:steps}), so that 10-step PGD results in a good approximation of the optimum, while balancing the computational load required for a thorough sweep of perturbation budgets.  For a given budget \(\varepsilon\), a step size of \(3\varepsilon/(2\cdot10)\) was used.

\begin{figure}[ht]
    \centering
    \begin{subfigure}[b]{0.46\textwidth}
        \centering
        \includegraphics[width=\textwidth]{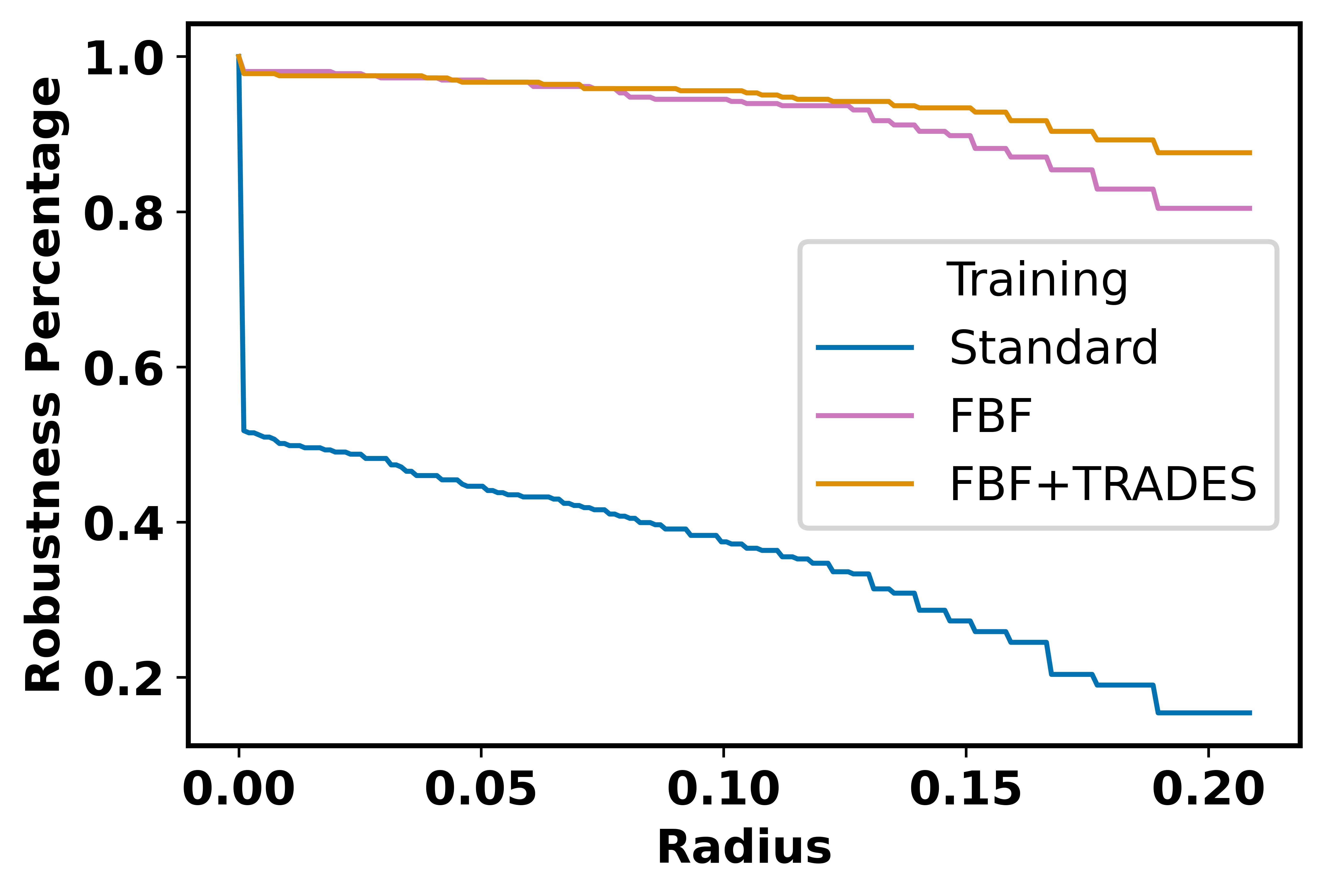}
        \caption{Robustness curve for each training method  (\(\sigma = 0.1, n = 160\))}
	  \label{fig:robustcurve}
    \end{subfigure}
    \hfill
    \begin{subfigure}[b]{0.46\textwidth}
        \centering
        \includegraphics[width=\textwidth]{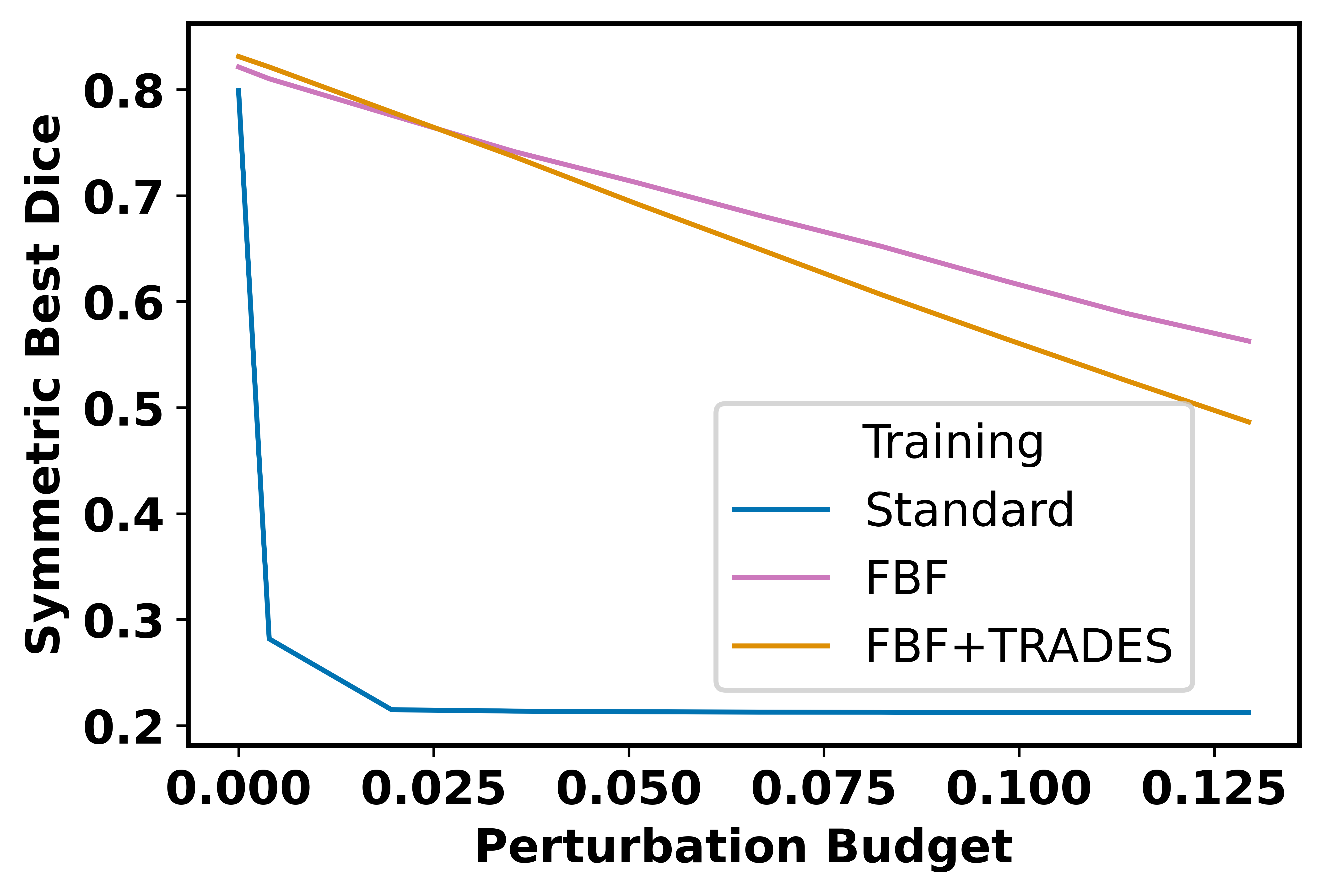}
        \caption{\(L^{\infty}\) security curve for each training method (10 steps of PGD)}
        \label{fig:seccurve}
    \end{subfigure}
    \caption{Robustness evaluation of the best performing model from each training method}
    \label{fig:robustness}
\end{figure}


From Figure \ref{fig:robustness}
, we see that while FBF training slightly outperforms FBF+TRADES training in robustness to \(L^{\infty}\) PGD attacks with budget \(\varepsilon > 8/255\), FBF+TRADES provides improved robustness to the random perturbations used in the probabilistic local equivalence analysis compared to FBF.  The security curve for \(L^{2}\) PGD 
and the local equivalence curve for \(\sigma = 0.25\) are given in Appendix \ref{app:l2robustness}.

\section{Conclusion}

We introduced a method based on the theory of randomized smoothing \cite{cohen19} for assessing the robustness of an arbitrary function to randomized input perturbations.  We showed probabilisitic local equivalence is able to discriminate between a standardly-trained model and an adversarially-trained model.  While the formulation of adversarial training implies robustness to such random perturbations, probabilistic local equivalence demonstrates that the standardly-trained model does not exhibit robustness to such random perturbations and that adversarial training is able to significantly improve the robustness of a model to this type of perturbation as well.

Further, we demonstrated that, in certain instances, adversarial training, whether FBF or FBF+TRADES, is able to match the performance of standard training on natural inputs.  It is likely that the size of the network (\(2\times\) ResNet-101 backbones) relative to the size of the training dataset (3082 images) played a role in these results.  Nevertheless, our results show that the foregone assumption that adversarial training will necessarily reduce the performance of the network on the training data distribution does not hold in general.  Rather, it stands to reason that the reduction in performance on the training distribution is most prevalent when the network is overfitting to the that distribution.  In our case, where the size of the dataset relative to the model makes it likely the model will underfit, adversarial training served as a form of data augmentation improving the network's performance on the training distribution and beyond.  Alternatively, in the regime where overfitting is likely, adversarial training can help to prevent such overfitting to the training distribution.

\bibliographystyle{abbrv}
\bibliography{robust_eval}

\appendix

\section{Proof of Proposition 1}
\label{app:proof}

\begin{lem}
\cite[Appendix A]{salman19}
Let \(f: \mathbb{R}^{n} \rightarrow [0, 1]^{2}\) be defined by \(f(x) = \big(f_{0}(x), f_{1}(x)\big)\) with \(f_{i}: \mathbb{R}^{n} \rightarrow [0, 1]\) and \(f_{0}(x) + f_{1}(x) = 1\) for all \(x \in \mathbb{R}^{n}\).  Let \(\varepsilon \sim \mathcal{N}(0, \sigma^{2}I)\).  If \(\mathbb{E}_{\varepsilon} f_{1}(x + \varepsilon) \geq \frac{1}{2}\), then
\[\mathbb{E}_{\varepsilon} f_{1}(x + \delta + \varepsilon) \geq \frac{1}{2}\textrm{ for all }\lvert \delta \rvert_{2} < \sigma \Phi^{-1} \circ f_{1}(x).\]
\label{lem}
\end{lem}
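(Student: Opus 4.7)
The plan is to deduce the statement from the $(1/\sigma)$-Lipschitz property of $\Phi^{-1} \circ \bar{f}_1$, where $\bar{f}_1(x) := \mathbb{E}_\varepsilon f_1(x+\varepsilon)$ denotes the Gaussian-smoothed version of $f_1$. Once this bound is in hand, the conclusion follows by a short chain of inequalities. Writing $p := \bar{f}_1(x) \geq \tfrac{1}{2}$ (so that $\Phi^{-1}(p) \geq 0$) and using monotonicity of $\Phi^{-1}$, any $\delta$ with $\lvert\delta\rvert_2 < \sigma\,\Phi^{-1}(p)$ satisfies
\[
\Phi^{-1}\bigl(\bar{f}_1(x+\delta)\bigr) \;\geq\; \Phi^{-1}\bigl(\bar{f}_1(x)\bigr) - \lvert\delta\rvert_2/\sigma \;>\; 0,
\]
whence $\bar{f}_1(x+\delta) > \tfrac{1}{2}$, which is the asserted conclusion.

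The substantive work is the Lipschitz estimate on $\Phi^{-1}\circ\bar{f}_1$, and I would attack it through a pointwise gradient bound. Differentiating under the integral and applying Stein's lemma for the Gaussian density yields
\[
\nabla \bar{f}_1(x) \;=\; \frac{1}{\sigma^2}\,\mathbb{E}_\varepsilon\bigl[f_1(x+\varepsilon)\,\varepsilon\bigr].
\]
For any unit vector $u$, the scalar $U := \langle\varepsilon, u\rangle/\sigma$ is standard normal, so $\langle \nabla\bar{f}_1(x), u\rangle = (1/\sigma)\,\mathbb{E}[f_1(x+\varepsilon)\,U]$. I would then solve the linear program of maximizing $\mathbb{E}[g(x+\varepsilon)\,U]$ over measurable $g:\mathbb{R}^n\to[0,1]$ subject to $\mathbb{E}[g(x+\varepsilon)] = p$: the maximizer is the bang-bang choice $g^\star = \mathds{1}\{U > \Phi^{-1}(1-p)\}$, whose value $\mathbb{E}[g^\star U] = \phi(\Phi^{-1}(p))$ (with $\phi$ the standard normal density) follows from a one-line computation. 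This gives $\lVert\nabla\bar{f}_1(x)\rVert_2 \leq \phi\bigl(\Phi^{-1}(\bar{f}_1(x))\bigr)/\sigma$, and combining with the identity $\frac{d}{dp}\Phi^{-1}(p) = 1/\phi(\Phi^{-1}(p))$ via the chain rule yields $\lVert\nabla(\Phi^{-1}\circ\bar{f}_1)\rVert_2 \leq 1/\sigma$ wherever the gradient exists; integrating along the segment from $x$ to $x+\delta$ produces the Lipschitz bound.

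The principal obstacle is the bang-bang optimization together with verifying the Stein identity under merely the hypothesis $f_1 \in [0,1]$ and measurable. If those details prove onerous, I would fall back on a Gaussian-isoperimetry route: write $f_1(y) = \int_0^1 \mathds{1}\{f_1(y) > s\}\,ds$, apply the Ehrhard--Borell inequality to the Gaussian measure of each super-level set (for which the corresponding Lipschitz property of $\Phi^{-1}$ applied to half-space measures is immediate by direct calculation), and integrate over $s$ to transfer the Lipschitz bound to the $[0,1]$-valued smoothed function. Either route lands at the same Lipschitz constant, after which the first paragraph closes the argument.
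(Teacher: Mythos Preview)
Your proposal is correct. You rightly read the radius in the statement as $\sigma\,\Phi^{-1}\bigl(\bar f_1(x)\bigr)$ with $\bar f_1(x)=\mathbb{E}_\varepsilon f_1(x+\varepsilon)$ rather than the literal (and otherwise meaningless) $\sigma\,\Phi^{-1}\bigl(f_1(x)\bigr)$; this is how the lemma is actually used in the proof of Proposition~1. Your derivation of the $(1/\sigma)$-Lipschitz bound for $\Phi^{-1}\circ\bar f_1$ via Stein's identity and the bang-bang optimization is sound, and the deduction of the conclusion from that bound is immediate as you wrote it.

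The paper's own proof, however, takes a much shorter path: it simply cites \cite[Appendix~A]{salman19} for the two-class certified-radius statement that the conclusion holds whenever $\lvert\delta\rvert_2 \leq \tfrac{\sigma}{2}\bigl(\Phi^{-1}(\bar f_1(x)) - \Phi^{-1}(\bar f_0(x))\bigr)$, and then observes via the symmetry $\Phi^{-1}(1-p) = -\Phi^{-1}(p)$ that this collapses to $\sigma\,\Phi^{-1}(\bar f_1(x))$ since $\bar f_0 + \bar f_1 = 1$. In other words, the paper outsources exactly the Lipschitz estimate you prove to the Salman et al.\ reference and keeps only the one-line algebraic simplification. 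Your route is therefore not a different mathematical idea so much as a self-contained reconstruction of the cited result; what you gain is independence from the reference, what the paper gains is brevity.
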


\begin{proof}
Appendix A in \cite{salman19} shows that the lemma holds for all \(\delta\) with
\[\lvert \delta \rvert_{2} \geq \frac{1}{2}\Big(\Phi^{-1} \circ f_{1}(x) - \Phi^{-1} \circ f_{0}(x)\Big).\]
Noting that \(1 - \Phi \circ \Phi^{-1} \circ f_{1}(x) = \Phi \circ \Phi^{-1}\big(-f_{1}(x)\big)\),
\begin{align*}
\Phi^{-1}\circ f_{0}(x) = \Phi^{-1}\Big(1-f_{1}(x)\Big) &= \Phi^{-1}\Big(1 - \Phi \circ \Phi^{-1} \circ f_{1}(x)\Big)\\
&= \Phi^{-1} \circ \Phi\Big(-\Phi^{-1} \circ f_{1}(x)\Big) = -\Phi^{-1} \circ f_{1}(x),
\end{align*}
and the result follows.
\end{proof}

\begin{prop1}
For any function \(f: \mathbb{R}^{n} \rightarrow [0, 1]\) and \(x \in X \subseteq \mathbb{R}^{n}\), with probability at least \(1 - \alpha\) over the randomness in \verb!CertifyProbabilisticEquivalence!, if \verb!CertifyProbabilisticEquivalence! returns a radius \(R\), then \(x \in \mathrm{ProbLocEquiv}(X, f, R)\).
\end{prop1}

\begin{proof}
If \texttt{CertifyProbabilisticEquivalence} returns a radius \(R\), then \verb!LowerConf!\verb!Bound! returned a value \(\underline{p} = \Phi(R/\sigma) > 1/2\) and as a lower bound on the probability of success in the Bernoulli trial of being locally equivalent,
\begin{align*}
\underline{p} &\leq P_{y'}\big(1 - \mathcal{M}(y_{0}, y')\big) < t)\\
&= P_{x'}\Big(1 - \mathcal{M}\big(f(x_{0}), f(x')\big) < t\Big)\\
&= P_{\varepsilon \sim \mathcal{N}}\Big(1 - \mathcal{M}\big(f(x_{0}), f(x_{0}+\varepsilon)\big) < t\Big)\\
&= P_{\varepsilon \sim \mathcal{N}}(d(x_{0}, x_{0} + \varepsilon) < t) \qquad\textrm{(from \eqref{eqn:metric})}\\
&= P_{\varepsilon \sim \mathcal{N}}\big(x_{0} + \varepsilon \in B_{d}(x_{0}, t)\big)\\
&= \widehat{\mathds{1}_{x_{0}}}(x_{0}).\qquad\textrm{(by \eqref{eqn:smoothed})}
\end{align*}
Lemma \ref{lem} applied to the function \(f = (1 - \mathds{1}_{x_{0}}, \mathds{1}_{x_{0}})\) then implies that for \(x \in B_{L^{2}}(x_{0}, R)\), \(\widehat{\mathds{1}_{x_{0}}}(x) \geq 1/2\) and \(x_{0} \in \textrm{ProbLocEquiv}(X, f, R)\).
\end{proof}

\section{Plots of F-measure and Symmetric Best Dice Score as a Function of Training Epochs}
\label{app:performance}

\begin{figure}[H]
    \centering
    \begin{subfigure}{0.48\textwidth}
        \centering
        \includegraphics[width=\textwidth]{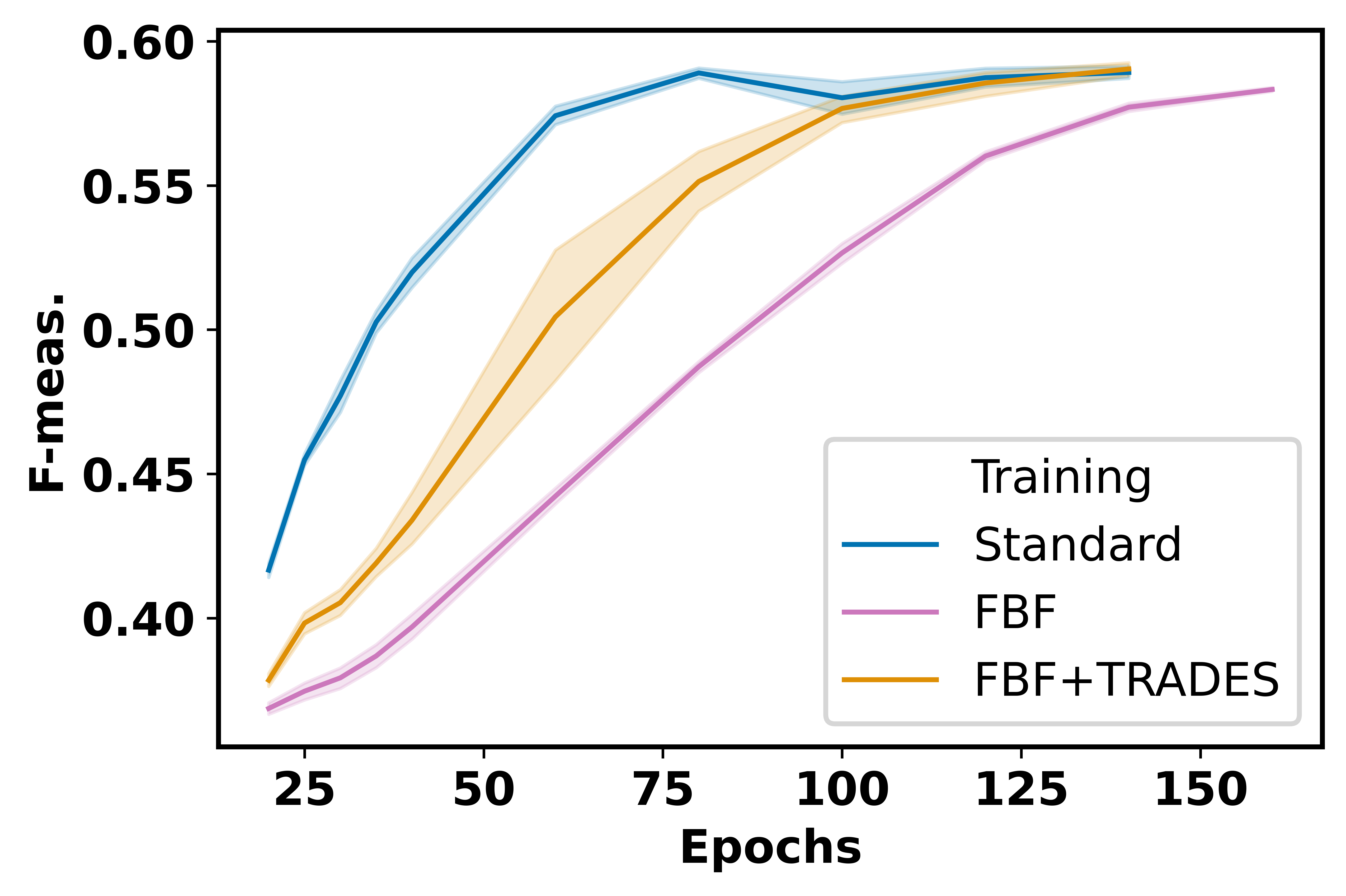}
    \end{subfigure}
    \hfill
    \begin{subfigure}{0.48\textwidth}
	  \centering
        \includegraphics[width=\textwidth]{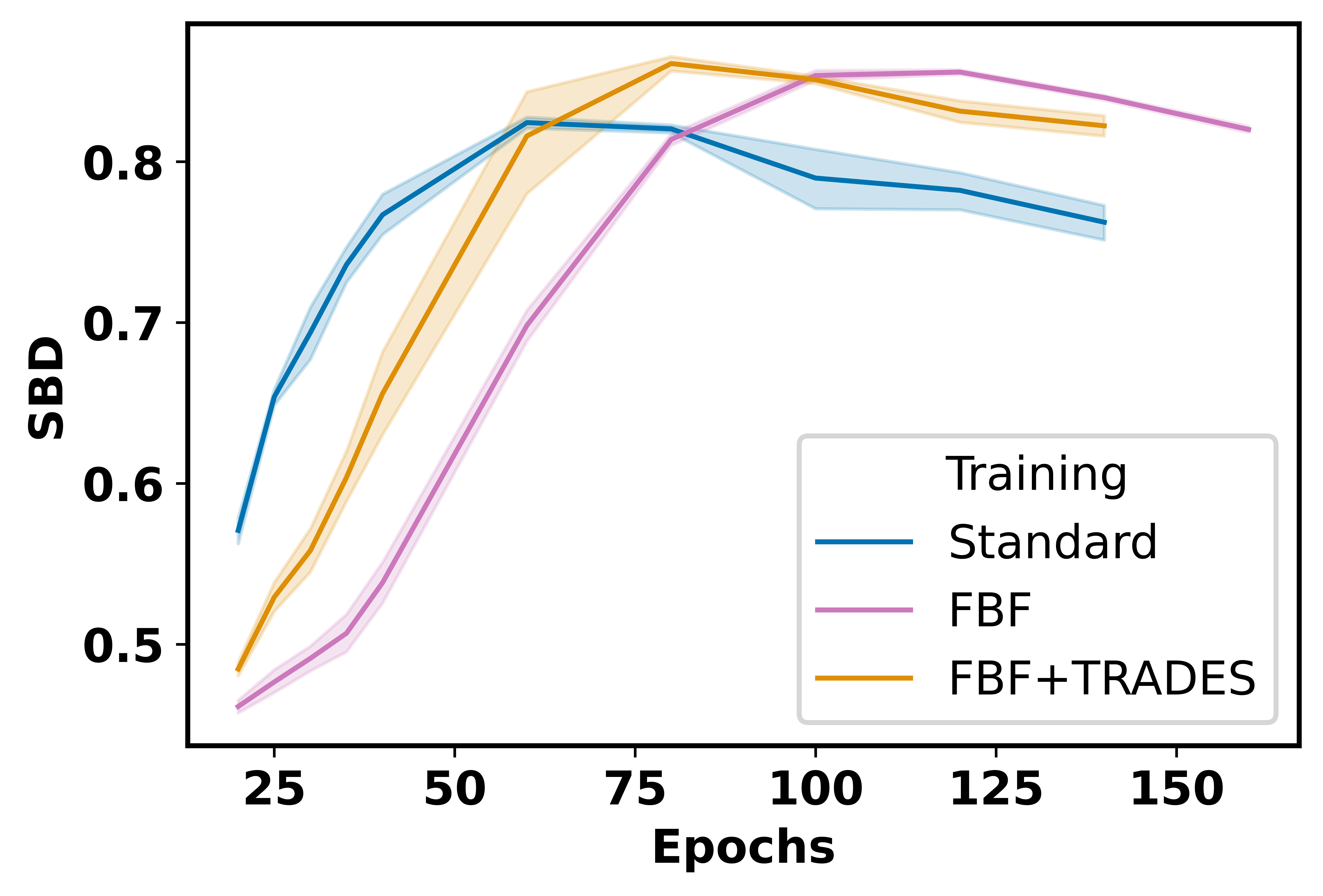}
    \end{subfigure}
    \caption{F-measure (left) and SBD (right) performance of the models on natural inputs from the validation set as a function of the number of training epochs}
    \label{fig:performance2}
\end{figure}

\begin{figure}[H]
    \centering
    \begin{subfigure}{0.48\textwidth}
        \centering
        \includegraphics[width=\textwidth]{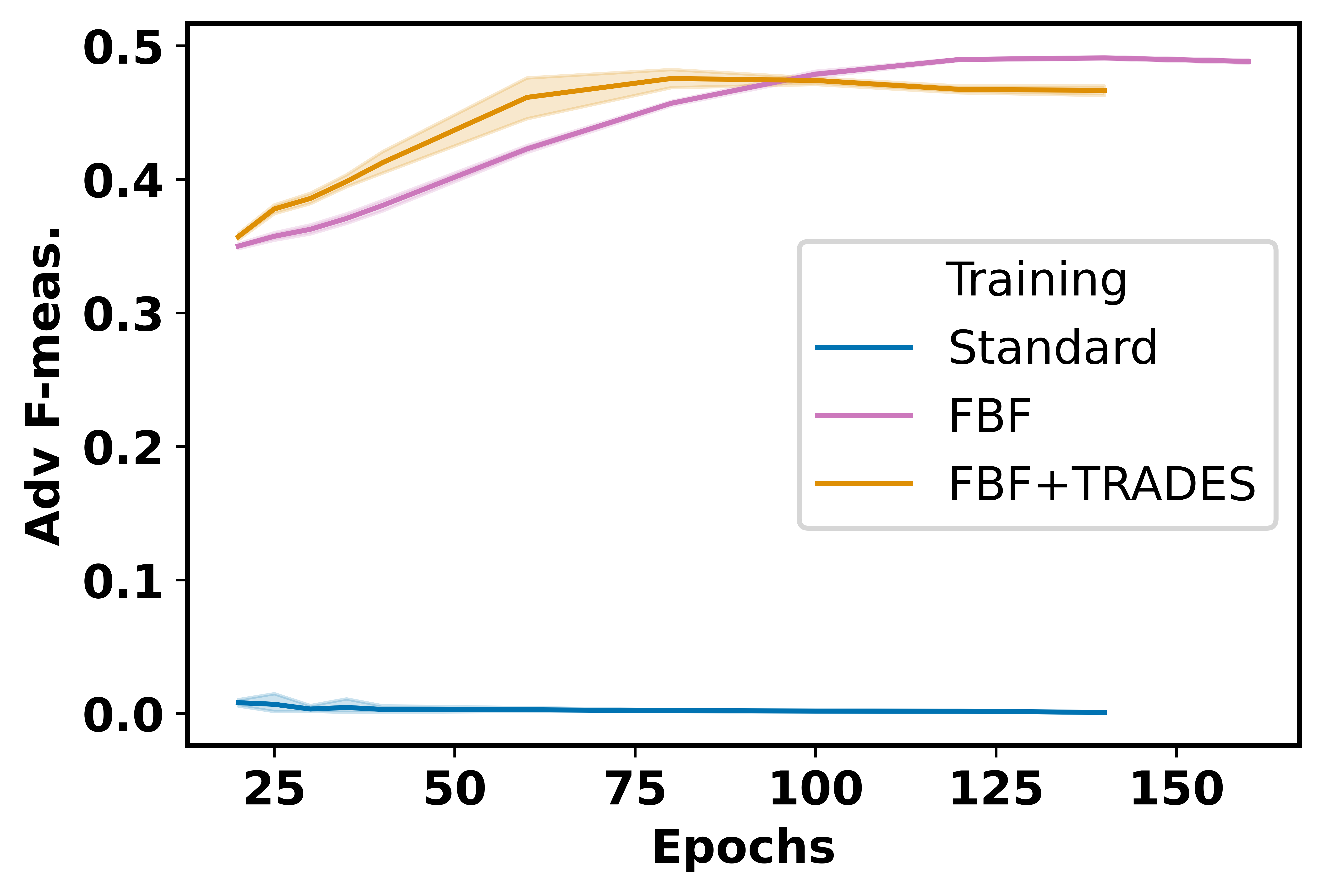}
    \end{subfigure}
    \hfill
    \begin{subfigure}{0.48\textwidth}
	  \centering
        \includegraphics[width=\textwidth]{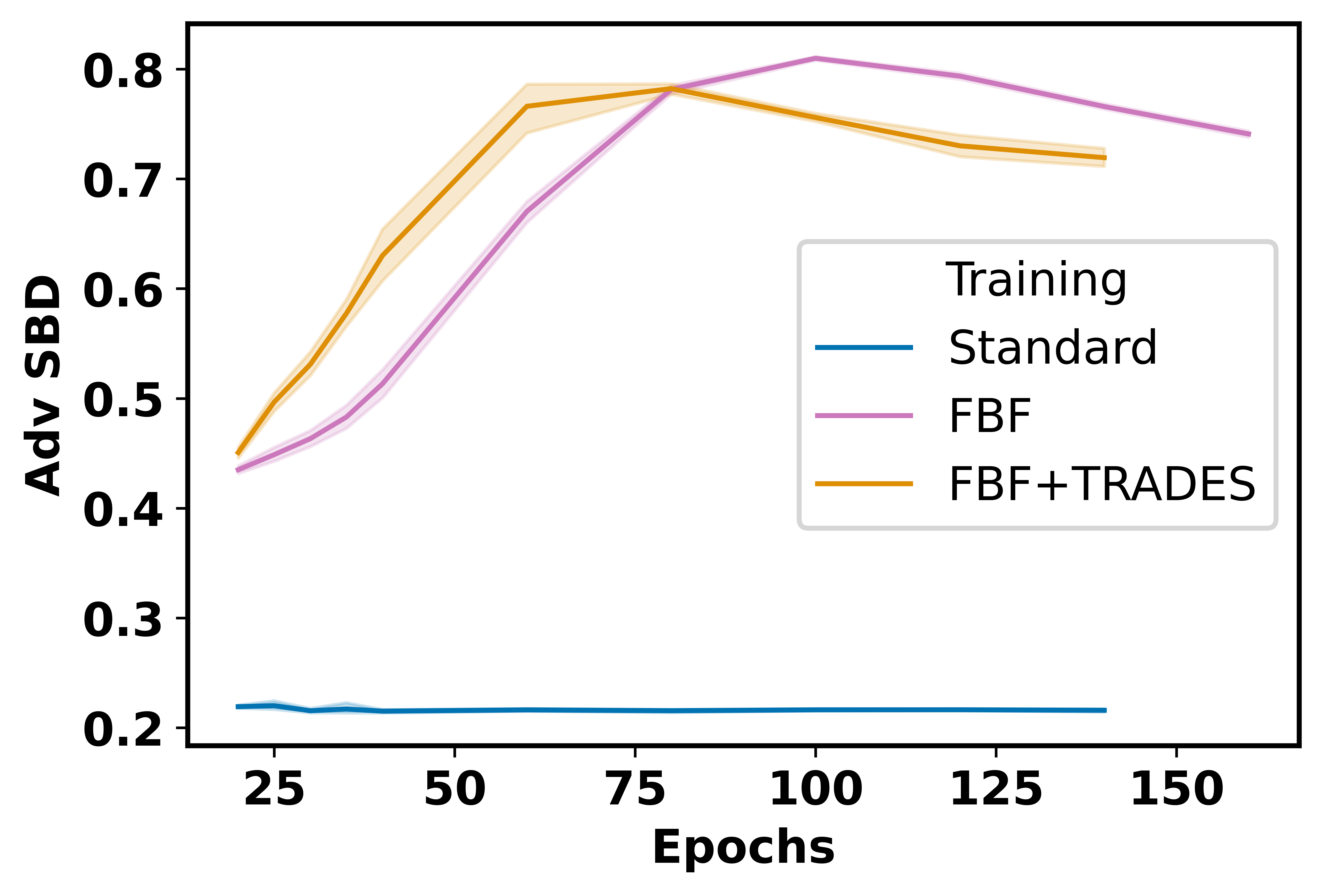}
    \end{subfigure}
    \caption{F-measure (left) and SBD (right) performance of the models on adversarial inputs from the validation set as a function of the number of training epochs}
    \label{fig:performance3}
\end{figure}

\section{Additional Adversarial Robustness Local Equivalence Plots}
\label{app:l2robustness}

\begin{figure}[H]
    \centering
    \includegraphics[width=0.65\textwidth]{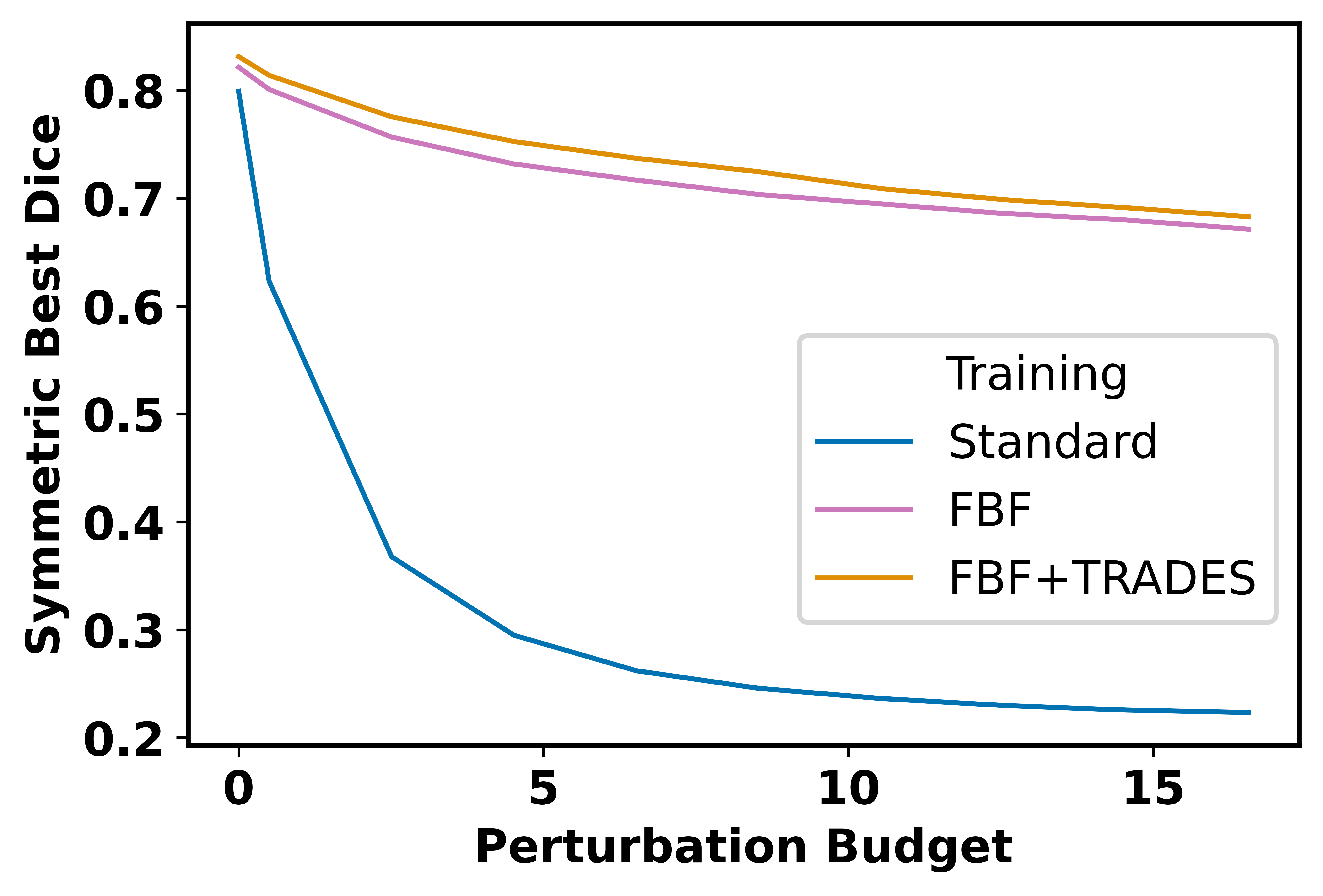}
    \caption{\(L^{2}\) security curve for each training method (10 steps of PGD)}
\end{figure}


\begin{figure}[H]
    \centering
    \includegraphics[width=0.65\textwidth]{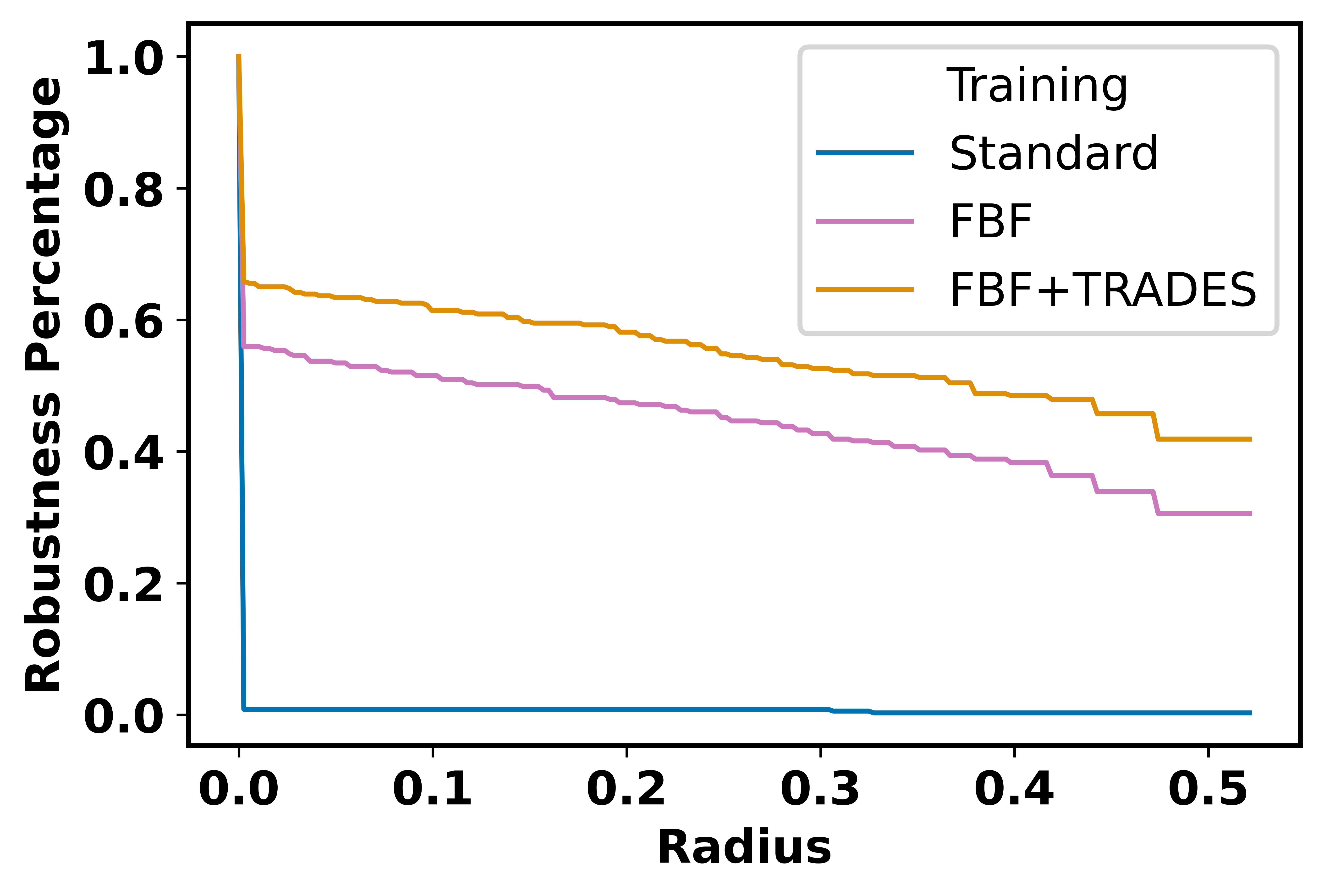}
    \caption{Robustness curve for each training method (\(\sigma=0.25, n=160\))}
\end{figure}


\section{Calculation of Total Training \& Evaluation Time}
\label{app:training}

\begin{table}[H]
\centering
\caption{Average training time for a given number of training epochs and training procedure}
\begin{tabular}{rcccccc}
\toprule
&\multicolumn{2}{c}{Standard}&\multicolumn{2}{c}{FBF}&\multicolumn{2}{c}{FBF+TRADES}\\
\cmidrule(lr){2-3}\cmidrule(lr){4-5}\cmidrule(lr){6-7}
&Avg. (min.)&\#&Avg. (min.)&\#&Avg. (min.)&\#\\
\midrule
20 Epochs&114.5&32&239.5&32&284.5&32\\
25 Epochs&141.5&16&300.0&32&353.0&32\\
30 Epochs&168.0&16&356.0&32&420.5&32\\
35 Epochs&194.0&16&411.5&16&486.0&16\\
40 Epochs&221.0&16&465.5&16&545.0&16\\
60 Epochs&327.0&16&696.5&16&831.5&16\\
80 Epochs&437.0&16&925.5&16&1087.5&16\\
100 Epochs&541.5&16&1184.5&16&1367.0&16\\
120 Epochs&758.0&16&1417.0&16&1636.0&16\\
140 Epochs&885.5&16&1620.5&16&1886.5&16\\
160 Epochs&&0&1878.5&16&&0\\
\bottomrule
\end{tabular}
\end{table}

\begin{table}[H]
\centering
\caption{Total time spent training and evaluating networks during final experiments}\
\begin{tabular}{rc}
\toprule
Phase & Total Time (min.)\\
\midrule
Training & \(413\,936\)\\[0.3cm]
\begin{minipage}{1.15in}
\begin{flushright}
Validation

(14 min./network)
\end{flushright}
\end{minipage}
& 8288\\[0.5cm]
\begin{minipage}{1.15in}
\begin{flushright}
Testing

(incl. Robustness/

Security curves)
\end{flushright}
\end{minipage}
& 6111\\
\midrule
Total & \(428\,335\)\\
\bottomrule
\end{tabular}
\end{table}

\section{Calculation of carbon emissions in terms of electric vehicle kilometers}
\label{app:carbon}

Based on figures on US carbon efficiency \cite{usprofile} and electric vehicle energy efficiency \cite{evemissions}, we find the following relationship between \si{kg.CO\textsubscript{2}} and \si{km} driven by an electric vehicle:

\[
\SI{1}{kg.CO\textsubscript{2}} \cdot
\frac{\SI{2.205}{lb}}{\SI{1}{kg}} \cdot
\frac{\SI{1}{MWh}}{\SI{853}{lb.CO\textsubscript{2}}} \cdot
\frac{\SI{1000}{kWh}}{\SI{1}{MWh}} \cdot
\frac{\SI{1}{mi}}{\SI{0.32}{kWh}} \cdot
\frac{\SI{1.609}{km}}{\SI{1}{mi}} = \SI{12.998}{km}
\]

\section{Effect of Number of PGD Iterations on Performance}
\label{app:steps}

\begin{figure}[H]
    \centering
    \includegraphics[width=0.65\textwidth]{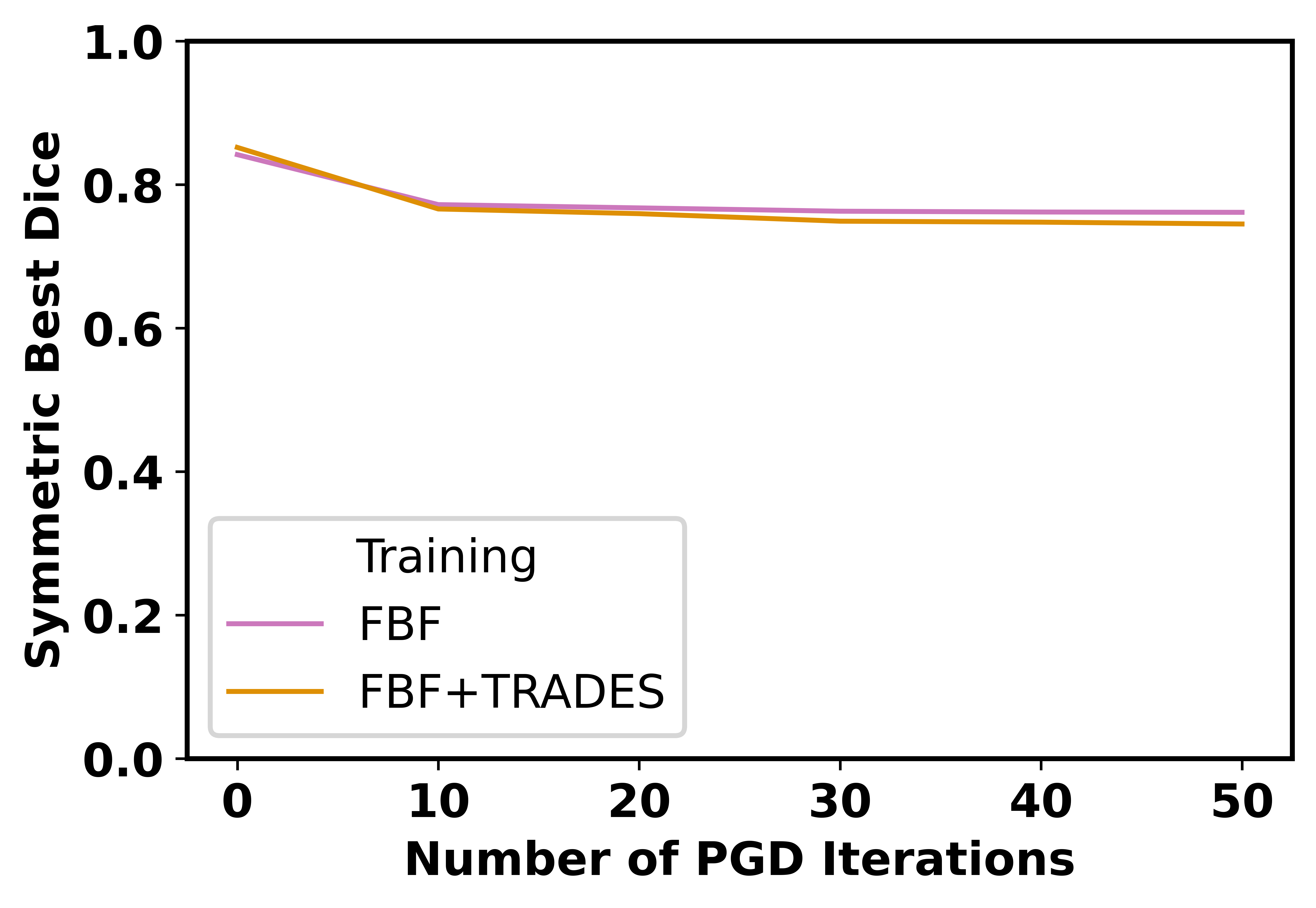}
    \caption{Model performance as a function of the number of PGD iterations (\(\varepsilon = 8/255\))}
\end{figure}

\section{Societal Impacts and Limitations}
\label{app:impacts}

The methods presented in this paper aim to improve the robustness of models, either by identifying non-robust models so their robustness can be improved or by training the models to be robust.  A positive impact of our work is in identifying non-robust models before their deployment can lead to harm \cite{balkanski20}.  The effect of the methods presented will hopefully be to improve the overall performance of models to which they are applied.  For systems with a positive impact on society, a more robust system should have a similarly positive impact.  However, the inverse is also true in that for a system with a detrimental impact on society, a more robust system will likely exacerbate this negative impact.


As a framework, probilistic local equivalence certification is highly customizable, requiring choices for similarity metric $\mathcal{M}$, robustness threshold $t$, and standard deviation $\sigma$, among other hyperparameters.  The flexibility of the framework can also become a limitation as it can be difficult to decide on appropriate choices for each of these hyperparameters in a principled manner.

\end{document}